\title{
Fast Second-Order Stochastic Backpropagation for Variational Inference
}
\author{
Kai Fan\\
Duke University \\
\texttt{kai.fan@stat.duke.edu}
\And
Ziteng Wang\thanks{Equal contribution as the first author}\\
HKUST\thanks{Refer to Hong Kong University of Science and Technology} \\
\texttt{wangzt2012@gmail.com}
\And
Jeffrey Beck\\
Duke University \\
\texttt{jeff.beck@duke.edu} 
\And
James T. Kwok\\ 
HKUST \\
\texttt{jamesk@cse.ust.hk} 
\And
Katherine Heller\\
Duke University \\
\texttt{kheller@gmail.com}
}
\newtheorem{thm}{Theorem}
\newtheorem{lem}[thm]{Lemma}
\newtheorem{cor}[thm]{Corollary}
\DeclareMathOperator{\Tr}{Tr}
\begin{document}

\maketitle

\begin{abstract}
We propose a second-order (Hessian or Hessian-free) based optimization method for variational inference inspired by Gaussian backpropagation, and argue that quasi-Newton optimization can be developed as well. This is accomplished by generalizing the gradient computation in stochastic backpropagation via a reparameterization trick with lower complexity. As an illustrative example, we apply this approach to the problems of Bayesian logistic regression and variational auto-encoder (VAE). Additionally, we compute bounds on the estimator variance of intractable expectations for the family  of Lipschitz continuous function. Our method is practical, scalable and model free. We demonstrate our method on several real-world datasets and provide comparisons with other stochastic gradient methods to show substantial enhancement in convergence rates.
\end{abstract}

\section{Introduction}
Generative models have become ubiquitous in machine learning and statistics and are now widely used in fields such as bioinformatics, computer vision, or natural language processing. These models benefit from being highly interpretable and easily extended.  Unfortunately, inference and learning with generative models is often intractable, especially for models that employ continuous latent variables, and so fast approximate methods are needed. Variational Bayesian (VB) methods \cite{beal2003variational} deal with this problem by approximating the true posterior that has a tractable parametric form and then identifying the set of parameters that maximize a variational lower bound on the marginal likelihood.  That is, VB methods turn an inference problem into an optimization problem that can be solved, for example, by gradient ascent.  

Indeed, efficient stochastic gradient variational Bayesian (SGVB) estimators have been developed for auto-encoder models \cite{kingma2014auto} and a number of papers have followed up on this approach \cite{titsias2014doubly,rezende2014stochastic, mnih2014neural, kingma2014semi,khan2014decoupled, salimans2015markov, gregor2015draw}. Recently, \cite{rezende2014stochastic} provided a complementary perspective by using stochastic backpropagation that is equivalent to SGVB and applied it to deep latent gaussian models. Stochastic backpropagation overcomes many limitations of traditional inference methods such as the mean-field or wake-sleep algorithms \cite{hinton1995wake} due to the existence of efficient computations of an unbiased estimate of the gradient of the variational lower bound. The resulting gradients can be used for parameter estimation via stochastic optimization methods such as stochastic gradient decent(SGD) or adaptive version (Adagrad) \cite{duchi2011adaptive}.

Unfortunately, methods such as SGD or Adagrad converge slowly for some difficult-to-train models, such as untied-weights auto-encoders or recurrent neural networks. The common experience is that gradient decent always gets stuck near saddle points or local extrema. Meanwhile the learning rate is difficult to tune. \cite{martens2010deep} gave a clear explanation on why Newton's method is preferred over gradient decent, which often encounters under-fitting problem if the optimizing function manifests pathological curvature. Newton's method is invariant to affine transformations so it can take advantage of curvature information, but has higher computational cost due to its reliance on the inverse of the Hessian matrix. This issue was partially addressed in \cite{martens2010deep} where the authors introduced Hessian free (HF) optimization and demonstrated its suitability for problems in machine learning.    

In this paper, we continue this line of research into $2^\text{nd}$ order variational inference algorithms. Inspired by the property of location scale families \cite{ferguson1962location}, we show how to reduce the computational cost of the Hessian or Hessian-vector product, thus allowing for a $2^\text{nd}$ order stochastic optimization scheme for variational inference under Gaussian approximation. In conjunction with the HF optimization, we propose an efficient and scalable $2^\text{nd}$ order stochastic Gaussian backpropagation for variational inference called HFSGVI. Alternately, L-BFGS \cite{bonnans2006numerical} version, a quasi-Newton method merely using the gradient information, is a natural generalization of $1^\text{st}$ order variational inference. 

The most immediate application would be to look at obtaining better optimization algorithms for variational inference. As to our knowledge, the model currently applying $2^\text{nd}$ order information is LDA \cite{blei2003latent,hoffman2013stochastic}, where the Hessian is easy to compute \cite{hensman2012fast}. In general, for non-linear factor models like non-linear factor analysis or the deep latent Gaussian models this is not the case.  Indeed, to our knowledge, there has not been any systematic investigation into the properties of various optimization algorithms and how they might impact the solutions to optimization problem arising from variational approximations. 

The main contributions of this paper are to fill such gap for variational inference by introducing a novel $2^\text{nd}$ order optimization scheme. First, we describe a clever approach to obtain curvature information with low computational cost, thus making the Newton's method both scalable and efficient. Second, we show that the variance of the lower bound estimator can be bounded by a dimension-free constant, extending the work of \cite{rezende2014stochastic} that discussed a specific bound for univariate function. Third, we demonstrate the performance of our method for Bayesian logistic regression and the VAE model in comparison to commonly used algorithms.  Convergence rate is shown to be competitive or faster. 

\section{Stochastic Backpropagation}
\label{sec:Backprop}

In this section, we extend the Bonnet and Price theorem \cite{bonnet1964transformations,price1958useful} to develop $2^\text{nd}$ order Gaussian backpropagation. Specifically, we consider how to optimize an expectation of the form $\mathbb{E}_{q_{\bm\theta}}[ f(\mathbf{z}|\mathbf{x})]$, where $\mathbf{z}$ and $\mathbf{x}$ refer to latent variables and observed variables respectively, and expectation is taken w.r.t distribution $q_{\bm\theta}$ and $f$ is some smooth loss function (e.g. it can be derived from a standard variational lower bound \cite{beal2003variational}). Sometimes we abuse notation and refer to $f(\mathbf{z})$ by omitting $\mathbf{x}$ when no ambiguity exists. To optimize such expectation, gradient decent methods require the $1^\text{st}$ derivatives, while Newton's methods require both the gradients and Hessian involving $2^\text{nd}$ order derivatives.  

\subsection{Second Order Gaussian Backpropagation}
If the distribution $q$ is a $d_z$-dimensional Gaussian  $\mathcal{N} ( \mathbf{z} | \bm\mu, \mathbf{C}) $, the required partial derivative is easily computed with a lower algorithmic cost $\mathcal{O}(d_z^2)$ \cite{rezende2014stochastic}. By using the property of Gaussian distribution, we can compute the $2^\text{nd}$ order partial derivative of $ \mathbb{E}_{\mathcal{N} ( \mathbf{z} | \bm\mu, \mathbf{C})}[ f(\mathbf{z}) ] $ as follows:
\begin{eqnarray} 
\nabla^2_{ \mu_i, \mu_j } \mathbb{E}_{\mathcal{N} ( \mathbf{z} | \bm\mu, \mathbf{C})}[ f(\mathbf{z}) ] &=& \mathbb{E}_{\mathcal{N} ( \mathbf{z} | \bm\mu, \mathbf{C})} [ \nabla^2_{ z_i, z_j } f(\mathbf{z}) ] = 2\nabla_{ C_{ij}} \mathbb{E}_{\mathcal{N} ( \mathbf{z} | \bm\mu, \mathbf{C})}[ f(\mathbf{z}) ], \label{eq:mumu} \\
\nabla_{ C_{i, j}, C_{k, l} }^2 \mathbb{E}_{\mathcal{N} ( \mathbf{z} | \bm\mu, \mathbf{C})}[ f(\mathbf{z}) ] &=& \frac{1}{4} \mathbb{E}_{\mathcal{N} ( \mathbf{z} | \bm\mu, \mathbf{C})} [ \nabla^4_{ z_i, z_j, z_k, z_l } f(\mathbf{z}) ] ,\label{eq:CC}\\ 
\nabla_{\mu_i, C_{k, l}}^2 \mathbb{E}_{\mathcal{N}(\mathbf{z}|\bm\mu, \mathbf{C})} [ f(\mathbf{z}) ] &=& \frac{1}{2} \mathbb{E}_{\mathcal{N}(\mathbf{z}|\bm\mu, \mathbf{C})} \left[ \nabla_{ z_i, z_k, z_l}^3 f(\mathbf{z}) \right]. \label{eq:muC}
\end{eqnarray}
Eq. (\ref{eq:mumu}), (\ref{eq:CC}), (\ref{eq:muC}) (proof in supplementary) have the nice property that a limited number of samples from $q$ are sufficient to obtain unbiased gradient estimates. However, note that Eq. (\ref{eq:CC}), (\ref{eq:muC}) needs to calculate the third and fourth derivatives of $f(\mathbf{z})$, which is highly computationally inefficient. To avoid the calculation of high order derivatives, we use a co-ordinate transformation.   

\subsection{Covariance Parameterization for Optimization}
\label{sec:repara}

By constructing the linear transformation (a.k.a. reparameterization) $ \mathbf{z} = \bm\mu + \mathbf{R} \bm\epsilon $, where $\bm\epsilon \sim \mathcal{N}(0, \mathbf{I}_{d_z}) $, we can generate samples from any Gaussian distribution $ \mathcal{N}(\bm\mu, \mathbf{C}) $ by simulating data from a standard normal distribution, provided the decomposition $\mathbf{C} = \mathbf{RR}^\top$ holds. This fact allows us to derive the following theorem indicating that the computation of $2^\text{nd}$ order derivatives can be scalable and programmed to run in parallel. 
\begin{thm}[\textbf{Fast Derivative}]
\label{thm:gaussback}
If $f$ is a twice differentiable function and $\mathbf{z}$ follows Gaussian distribution $ \mathcal{N}(\bm\mu, \mathbf{C}) $, $\mathbf{C} = \mathbf{RR}^\top$, where both the mean $\bm\mu$ and $\mathbf{R}$ depend on a $d$-dimensional parameter $\bm\theta=(\theta_l)_{l=1}^d$, i.e. $\bm\mu(\bm\theta),\mathbf{R}(\bm\theta)$, we have $\nabla_{\bm\mu,\mathbf{R}}^2 \mathbb{E}_{ \mathcal{N}(\bm\mu, \mathbf{C})} [ f(\mathbf{z}) ] = \mathbb{E}_{ \bm\epsilon\sim\mathcal{N}(0, \mathbf{I}_{d_z})} [\bm\epsilon^\top\otimes\mathbf{H}]$, and $
\nabla_\mathbf{R}^2 \mathbb{E}_{ \mathcal{N}(\bm\mu, \mathbf{C})} [ f(\mathbf{z}) ]  =  \mathbb{E}_{ \bm\epsilon\sim\mathcal{N}(0, \mathbf{I}_{d_z})} [(\bm\epsilon\bm\epsilon^T)\otimes\mathbf{H}]$. This then implies
\begin{eqnarray}
\nabla_{\theta_l} \mathbb{E}_{\mathcal{N}(\bm\mu, \mathbf{C})} [ f(\mathbf{z}) ] & = & \mathbb{E}_{ \bm\epsilon\sim\mathcal{N}(0, \mathbf{I})} \left[ \mathbf{g}^\top \frac{ \partial (\bm\mu + \mathbf{R}\bm\epsilon)}{ \partial\theta_l } \right] , \label{eq:grad} \\
\nabla_{\theta_{l_1}\theta_{l_2}}^2 \mathbb{E}_{ \mathcal{N}(\bm\mu, \mathbf{C}_)} [ f(\mathbf{z}) ] & = & \mathbb{E}_{ \bm\epsilon\sim\mathcal{N}(0, \mathbf{I})} \left[ \frac{ \partial (\bm\mu + \mathbf{R}\bm\epsilon)}{ \partial\theta_{l_1} }^\top \mathbf{H}  \frac{\partial (\bm\mu + \mathbf{R}\bm\epsilon)}{\partial\theta_{l_2}} + \mathbf{g}^\top \frac{ \partial^2 (\bm\mu + \mathbf{R}\bm\epsilon)}{ \partial\theta_{l_1}\partial_{l_2} } \right] ,  \label{eq:hess}
\end{eqnarray}
where $\otimes$ is Kronecker product, and gradient $\mathbf{g}$, Hessian $\mathbf{H}$ are evaluated at $\bm\mu+\mathbf{R}\bm\epsilon$ in terms of $f(\mathbf{z})$. 
\end{thm}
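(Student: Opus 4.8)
The plan is to exploit the reparameterization $\mathbf{z} = \bm\mu + \mathbf{R}\bm\epsilon$ to rewrite the parameter-dependent integral over $\mathcal{N}(\bm\mu, \mathbf{C})$ as an expectation against the fixed, parameter-free standard Gaussian:
\[
\mathbb{E}_{\mathcal{N}(\bm\mu, \mathbf{C})}[f(\mathbf{z})] = \mathbb{E}_{\bm\epsilon \sim \mathcal{N}(0, \mathbf{I}_{d_z})}\!\left[ f(\bm\mu + \mathbf{R}\bm\epsilon) \right].
\]
Because the measure on the right no longer depends on $(\bm\mu, \mathbf{R})$ or on $\bm\theta$, every derivative can be pushed inside the expectation once a dominated-convergence argument is in place, and all that remains is to differentiate the integrand $f(\bm\mu + \mathbf{R}\bm\epsilon)$ by the chain rule. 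This is precisely what makes the high-order derivatives of Eq.~(\ref{eq:CC}) and (\ref{eq:muC}) unnecessary.

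First I would establish the two raw identities for differentiation with respect to the entries of $\bm\mu$ and $\mathbf{R}$. Writing $z_k = \mu_k + \sum_m R_{km}\epsilon_m$, the elementary derivatives $\partial z_k/\partial \mu_i = \delta_{ki}$ and $\partial z_k/\partial R_{ij} = \delta_{ki}\epsilon_j$ give, via the chain rule, $\partial f/\partial \mu_i = g_i$ and $\partial f/\partial R_{ij} = g_i\epsilon_j$, where $\mathbf{g} = \nabla f$ is evaluated at $\bm\mu + \mathbf{R}\bm\epsilon$. Differentiating a second time and using $\partial g_i/\partial R_{kl} = H_{ik}\epsilon_l$ then yields the entrywise expressions $H_{ik}\epsilon_l$ and $\epsilon_j \epsilon_l H_{ik}$ for the mixed $(\bm\mu,\mathbf{R})$ and the pure $\mathbf{R}$ second derivatives. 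Reading these off in the appropriate index order is exactly what the Kronecker-product notation $\bm\epsilon^\top \otimes \mathbf{H}$ and $(\bm\epsilon\bm\epsilon^\top) \otimes \mathbf{H}$ encodes, which establishes the first two displayed claims after taking expectations.

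Next I would obtain the $\bm\theta$-derivatives by composing with the maps $\bm\mu(\bm\theta),\mathbf{R}(\bm\theta)$. Equation (\ref{eq:grad}) follows immediately from the single chain rule $\partial f/\partial \theta_l = \mathbf{g}^\top \, \partial(\bm\mu + \mathbf{R}\bm\epsilon)/\partial \theta_l$. For the Hessian (\ref{eq:hess}) I would apply the product rule, differentiating this once more in $\theta_{l_2}$: the term in which the derivative lands on $\mathbf{g}$ produces $\partial \mathbf{g}/\partial \theta_{l_2} = \mathbf{H}\, \partial(\bm\mu + \mathbf{R}\bm\epsilon)/\partial \theta_{l_2}$ (again by the chain rule through $\mathbf{z}$), giving the quadratic form $\partial(\bm\mu+\mathbf{R}\bm\epsilon)^\top/\partial\theta_{l_1}\, \mathbf{H}\, \partial(\bm\mu+\mathbf{R}\bm\epsilon)/\partial \theta_{l_2}$ after invoking the symmetry of $\mathbf{H}$, while the term in which the derivative lands on $\partial(\bm\mu+\mathbf{R}\bm\epsilon)/\partial\theta_{l_1}$ produces the correction $\mathbf{g}^\top \, \partial^2(\bm\mu + \mathbf{R}\bm\epsilon)/\partial\theta_{l_1}\partial\theta_{l_2}$. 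Taking expectations then yields (\ref{eq:hess}).

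The routine part is the chain-rule bookkeeping; the two points needing genuine care are (i) justifying the interchange of expectation and differentiation, which requires dominating the relevant derivatives of $f(\bm\mu + \mathbf{R}\bm\epsilon)$ by an integrable function uniformly near the parameter value (twice-differentiability of $f$ together with the rapidly decaying Gaussian tails should suffice under mild growth conditions), and (ii) matching the entrywise derivatives to the stated Kronecker layout, i.e.\ fixing the vectorization convention so that $H_{ik}\epsilon_l$ and $\epsilon_j\epsilon_l H_{ik}$ align with $\bm\epsilon^\top \otimes \mathbf{H}$ and $(\bm\epsilon\bm\epsilon^\top)\otimes \mathbf{H}$ rather than with a transpose or permutation of them. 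I expect (ii), the indexing and Kronecker convention, to be the main source of friction, since it is the one place where an otherwise mechanical differentiation can silently go wrong.
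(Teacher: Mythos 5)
Your proposal is correct and follows essentially the same route as the paper's proof: reparameterize as $\mathbb{E}_{\bm\epsilon}[f(\bm\mu+\mathbf{R}\bm\epsilon)]$, differentiate under the (now parameter-free) expectation, compute the entrywise derivatives $g_i$, $g_i\epsilon_j$, $H_{ik}\epsilon_l$, $\epsilon_j\epsilon_l H_{ik}$ by the chain rule, package them as $\bm\epsilon^\top\otimes\mathbf{H}$ and $(\bm\epsilon\bm\epsilon^\top)\otimes\mathbf{H}$, and then obtain Eqs.~(\ref{eq:grad})--(\ref{eq:hess}) by composing with $\bm\mu(\bm\theta),\mathbf{R}(\bm\theta)$ via the chain and product rules. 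The only cosmetic difference is that the paper first routes the $\theta_l$-gradient through the Bonnet/Price form $\mathbf{g}^\top\partial\bm\mu/\partial\theta_l+\tfrac{1}{2}\Tr(\mathbf{H}\,\partial\mathbf{C}/\partial\theta_l)$ before converting to the $\mathbf{R}$-parameterization, whereas you differentiate the reparameterized integrand directly; both yield the same identities.
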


If we consider the mean and covariance matrix as the variational parameters in variational inference, the first two results w.r.t $\bm\mu,\mathbf{R}$ make parallelization possible, and reduce computational cost of the Hessian-vector multiplication due to the fact that $(A^\top\otimes B)vec(V) = vec(AVB)$. If the model has few parameters or a large resource budget (e.g. GPU) is allowed, Theorem \ref{thm:gaussback} launches the foundation for exact $2^\text{nd}$ order derivative computation in parallel. In addition, note that the $2^\text{nd}$ order gradient computation on model parameter $\bm\theta$ only involves matrix-vector or vector-vector multiplication, thus leading to an algorithmic complexity that is $\mathcal{O}(d_z^2)$ for $2^\text{nd}$ order derivative of $\bm\theta$, which is the same as $1^\text{st}$ order gradient \cite{rezende2014stochastic}. The derivative computation at function $f$ is up to $2^\text{nd}$ order, avoiding to calculate $3^\text{rd}$ or $4^\text{th}$ order derivatives. One practical parametrization assumes a diagonal covariance matrix $\mathbf{C}=\text{diag}\{\sigma_1^2,...,\sigma_{d_z}^2\}$. This reduces the actual computational cost compared with Theorem \ref{thm:gaussback}, albeit the same order of the complexity ($\mathcal{O}(d_z^2)$) (see supplementary material). Theorem \ref{thm:gaussback} holds for a large class of distributions in addition to Gaussian distributions, such as student $t$-distribution. If the dimensionality $d$ of embedded parameter $\bm\theta$ is large, computation of the gradient $\mathbf{G}_{\bm\theta}$ and Hessian $\mathbf{H}_{\bm\theta}$ (differ from $\mathbf{g}$, $\mathbf{H}$ above) will be linear and quadratic w.r.t $d$, which may be unacceptable. Therefore, in the next section we attempt to reduce the computational complexity w.r.t $d$.

\subsection{Apply Reparameterization on Second Order Algorithm}
\label{sec:2ndAlg}

In standard Newton's method, we need to compute the Hessian matrix and its inverse, which is intractable for limited computing resources. \cite{martens2010deep} applied Hessian-free (HF) optimization method in deep learning effectively and efficiently. This work largely relied on the technique of fast Hessian matrix-vector multiplication \cite{pearlmutter1994fast}. We combine reparameterization trick with Hessian-free or quasi-Newton to circumvent matrix inverse problem.

\textbf{Hessian-free} Unlike quasi-Newton methods HF doesn't make any approximation on the Hessian. HF needs to compute $\mathbf{H_{\bm\theta}v}$, where $\mathbf{v}$ is any vector that has the matched dimension to $\mathbf{H}_{\bm\theta}$, and then uses conjugate gradient algorithm to solve the linear system $\mathbf{H_{\bm\theta}v} = -\nabla F(\bm\theta)^\top \mathbf{v}$, for any objective function $F$. \cite{martens2010deep} gives a reasonable explanation for Hessian free optimization. In short, unlike a pre-training method that places the parameters in a search region to regularize\cite{erhan2010does}, HF solves issues of pathological curvature in the objective by taking the advantage of rescaling property of Newton's method. By definition $\mathbf{H_{\bm\theta}v} = \lim_{\gamma \rightarrow 0} \frac{ \nabla F(\bm\theta + \gamma \mathbf{v} ) -  \nabla F(\bm\theta)}{\gamma}$ indicating that $\mathbf{H_{\bm\theta}v}$ can be numerically computed by using finite differences at $\gamma$. However, this numerical method is unstable for small $\gamma$.  

In this section, we focus on the calculation of $\mathbf{H_{\bm\theta}v}$ by leveraging a reparameterization trick. Specifically, we apply an $\mathcal{R}$-operator technique \cite{pearlmutter1994fast} for computing the product $\mathbf{H_{\bm\theta}v}$ exactly. Let $F=\mathbb{E}_{\mathcal{N}(\bm\mu,\mathbf{C})}[f(\mathbf{z})]$ and reparameterize $\mathbf{z}$ again as Sec. \ref{sec:repara}, we do variable substitution $\bm\theta\leftarrow \bm\theta+ \gamma\mathbf{v}$ after gradient Eq. (\ref{eq:grad}) is obtained, and then take derivative on $\gamma$. Thus we have the following analytical expression for Hessian-vector multiplication:
\begin{eqnarray}
\label{eq:hv}
\mathbf{H_{\bm\theta}v} & =& \left. \frac{\partial}{\partial \gamma} \nabla F( \bm\theta + \gamma \mathbf{v}) \right|_{\gamma = 0} = \left. \frac{\partial}{\partial \gamma}  \mathbb{E}_{\mathcal{N}(0, \mathbf{I})} \left[ \mathbf{g}^\top \left. \frac{ \partial \left( \bm\mu(\bm\theta) + \mathbf{R}(\bm\theta)\bm\epsilon\right) }{ \partial\bm\theta} \right|_{\bm\theta\leftarrow \bm\theta+ \gamma\mathbf{v}} \right] \right|_{\gamma = 0} \nonumber \\
&=&  \left. \mathbb{E}_{\mathcal{N}(0, \mathbf{I})} \left[ \frac{\partial}{\partial \gamma} \left( \mathbf{g}^\top  \left. \frac{ \partial \left( \bm\mu(\bm\theta) + \mathbf{R}(\bm\theta)\bm\epsilon \right)}{ \partial\bm\theta} \right|_{\bm\theta\leftarrow \bm\theta+ \gamma\mathbf{v}} \right)\right] \right|_{\gamma = 0} .
\end{eqnarray}
Eq. (\ref{eq:hv}) is appealing since it does not need to store the dense matrix and provides an unbiased $\mathbf{H_{\bm\theta}v}$ estimator with a small sample size. In order to conduct the $2^\text{nd}$ order optimization for variational inference, if the computation of the gradient for variational lower bound is completed, we only need to add one extra step for gradient evaluation via Eq. (\ref{eq:hv}) which has the same computational complexity as Eq. (\ref{eq:grad}). This leads to a Hessian-free variational inference method described in Algorithm \ref{Alg:SecondOrder}.

For the worst case of HF, the conjugate gradient (CG) algorithm requires at most $d$ iterations to terminate, meaning that it requires $d$ evaluations of $\mathbf{H_{\bm\theta}v}$ product. However, the good news is that CG leads to good convergence after a reasonable number of iterations. In practice we found that it may not necessary to wait CG to converge. In other words, even if we set the maximum iteration $K$ in CG to a small fixed number (e.g., 10 in our experiments, though with thousands of parameters), the performance does not deteriorate. The early stoping strategy may have the similar effect of Wolfe condition to avoid excessive step size in Newton's method. Therefore we successfully reduce the complexity of each iteration to $\mathcal{O}(Kdd_z^2)$, whereas $\mathcal{O}(dd_z^2)$ is for one SGD iteration.

\textbf{L-BFGS} Limited memory BFGS utilizes the information gleaned from the gradient vector to approximate the Hessian matrix without explicit computation, and we can readily utilize it within our framework. The basic idea of BFGS approximates Hessian by an iterative algorithm $\mathbf{B}_{t+1}=\mathbf{B}_t + \Delta\mathbf{G}_t\Delta\mathbf{G}_t^\top/\Delta\bm\theta_t\Delta\bm\theta_t^\top - \mathbf{B}_t\Delta\bm\theta_t\Delta\bm\theta_t^\top\mathbf{B}_t/\Delta\bm\theta_t^\top\mathbf{B}_t\Delta\bm\theta_t$, where $\Delta\mathbf{G}_t=\mathbf{G}_t-\mathbf{G}_{t-1}$ and $\Delta\bm\theta_t=\bm\theta_t-\bm\theta_{t-1}$. By Eq. (\ref{eq:grad}), the gradient $\mathbf{G}_t$ at each iteration can be obtained without any difficulty. However, even if this low rank approximation to the Hessian is easy to invert analytically due to the Sherman-Morrison formula, we still need to store the matrix. L-BFGS will further implicitly approximate this dense $\mathbf{B}_t$ or $\mathbf{B}_t^{-1}$ by tracking only a few gradient vectors and a short history of parameters and therefore has a linear memory requirement. In general, L-BFGS can perform a sequence of inner products with the $K$ most recent $\Delta\bm\theta_t$ and $\Delta\mathbf{G}_t$, where $K$ is a predefined constant (10 or 15 in our experiments). Due to the space limitations, we omit the details here but none-the-less will present this algorithm in experiments section. 

\begin{algorithm}[t]
\caption{Hessian-free Algorithm on Stochastic Gaussian Variational Inference (HFSGVI)}
\label{Alg:SecondOrder}
\begin{algorithmic}[1]

\renewcommand{\algorithmicrequire}{\textbf{Parameters:}}
\REQUIRE Minibatch Size  $B$, ~Number of samples to estimate the expectation  $M$ ($=1$ as default), \\

\renewcommand{\algorithmicrequire}{\textbf{Input:}}
\renewcommand{\algorithmicensure}{\textbf{Output:}}
\REQUIRE Observation $\mathbf{X}$ (and $\mathbf{Y}$ if required), Lower bound function $\mathcal{L}=\mathbb{E}_{\mathcal{N}(\bm\mu,\mathbf{C})}[f_{\mathcal{L}}]$ \ENSURE Parameter $\bm\theta$ after having converged.

\FOR{$ t = 1,2,\dots $}
\STATE  $\mathbf{x}_{b=1}^B \leftarrow$ Randomly draw $B$ datapoints from full data set $\mathbf{X}$;
\STATE $\bm\epsilon_{m_b=1}^M \leftarrow$ sample $M$ times from $\mathcal{N}(0, \mathbf{I})$ for each $\mathbf{x}_b$;
\STATE Define gradient $\mathbf{G}(\bm\theta) = \frac{1}{M} \sum_b\sum_{m_b}\mathbf{g}_{b,m}^\top \frac{ \partial (\bm\mu + \mathbf{R}\bm\epsilon_{m_b})}{ \partial\bm\theta }$, $\mathbf{g}_{b,m}=\nabla_{\mathbf{z}}( f_\mathcal{L}(\mathbf{z}|\mathbf{x}_b) )|_{\mathbf{z}=\bm\mu + \mathbf{R}\bm\epsilon_{m_b}}$;
\STATE Define function $\mathbf{B} (\bm\theta,\mathbf{v})  =\left. \nabla_{\gamma} \mathbf{G}(\bm\theta+\gamma\mathbf{v}) \right|_{\gamma = 0} $, where $\mathbf{v}$ is a $d$-dimensional vector;
\STATE Using Conjugate Gradient algorithm to solve linear system: $\mathbf{B}(\bm\theta_t, \mathbf{p}_t) = - \mathbf{G}(\bm\theta_t)$;
\STATE $\bm\theta_{t+1} = \bm\theta_{t} + \mathbf{p}_t $;
\ENDFOR
\end{algorithmic}
\end{algorithm}

\subsection{Estimator Variance}
\label{sec:Variance}
The framework of stochastic backpropagation \cite{kingma2014semi,kingma2014auto,mnih2014neural,rezende2014stochastic} extensively uses the mean of very few samples (often just one) to approximate the expectation. Similarly we approximate the left side of Eq. (\ref{eq:grad}), (\ref{eq:hess}), (\ref{eq:hv}) by sampling few points from the standard normal distribution. However, the magnitude of the variance of such an estimator is not seriously discussed. \cite{rezende2014stochastic} simply explored the variance quantitatively for separable functions.\cite{mnih2014neural} merely borrowed the variance reduction technique from reinforcement learning by centering the learning signal in expectation and performing variance normalization. Here, we will generalize the treatment of variance to a broader family, Lipschitz continuous function. 

\begin{thm}[\textbf{Variance Bound}]\label{thm:Var}
If $f$ is an $L$-Lipschitz differentiable function and $\bm\epsilon\sim\mathcal{N}(0,\mathbf{I}_{d_z})$, then $
\mathbb{E}[(f(\bm\epsilon)-\mathbb{E}[f(\bm\epsilon)])^2] \leq \frac{L^2\pi^2}{4} .$
\end{thm}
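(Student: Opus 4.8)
The plan is to establish this as a Gaussian Poincaré-type inequality, proved by a rotation/interpolation coupling that makes the bound \emph{dimension-free} (note the constant $\tfrac{L^2\pi^2}{4}$ does not involve $d_z$, as advertised in the introduction). First I would introduce an independent copy $\bm\epsilon'\sim\mathcal{N}(0,\mathbf{I}_{d_z})$ and use the elementary identity $f(\bm\epsilon)-\mathbb{E}[f(\bm\epsilon)]=\mathbb{E}_{\bm\epsilon'}[f(\bm\epsilon)-f(\bm\epsilon')]$ together with Jensen's inequality to obtain
\[
\mathbb{E}\big[(f(\bm\epsilon)-\mathbb{E}[f(\bm\epsilon)])^2\big]\;\le\;\mathbb{E}_{\bm\epsilon,\bm\epsilon'}\big[(f(\bm\epsilon)-f(\bm\epsilon'))^2\big].
\]
This reduces the problem to controlling the expected squared increment of $f$ between two independent Gaussian points, at the (deliberate) cost of a factor $2$ that accounts for why the final constant is $\tfrac{\pi^2}{4}$ rather than the sharper $\tfrac{\pi^2}{8}$.

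Next I would connect the two copies by the interpolation path $\bm\epsilon(\theta)=\bm\epsilon\sin\theta+\bm\epsilon'\cos\theta$ for $\theta\in[0,\tfrac{\pi}{2}]$, so that $\bm\epsilon(0)=\bm\epsilon'$ and $\bm\epsilon(\tfrac{\pi}{2})=\bm\epsilon$. Writing $\dot{\bm\epsilon}(\theta)=\tfrac{d}{d\theta}\bm\epsilon(\theta)=\bm\epsilon\cos\theta-\bm\epsilon'\sin\theta$ and using the fundamental theorem of calculus (valid since $f$ is differentiable),
\[
f(\bm\epsilon)-f(\bm\epsilon')=\int_0^{\pi/2}\nabla f(\bm\epsilon(\theta))^\top\dot{\bm\epsilon}(\theta)\,d\theta.
\]
The crucial structural observation is that the coefficient vectors $(\sin\theta,\cos\theta)$ and $(\cos\theta,-\sin\theta)$ are orthonormal, so by rotation invariance of the isotropic Gaussian, for each fixed $\theta$ the pair $(\bm\epsilon(\theta),\dot{\bm\epsilon}(\theta))$ consists of two \emph{independent} standard Gaussian vectors.

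Finally I would apply Cauchy--Schwarz in the $\theta$ variable, $\big(\int_0^{\pi/2}h\,d\theta\big)^2\le\tfrac{\pi}{2}\int_0^{\pi/2}h^2\,d\theta$, take expectations, and condition on $\bm\epsilon(\theta)$: since $\dot{\bm\epsilon}(\theta)$ is an independent standard Gaussian, $\mathbb{E}\big[(\nabla f(\bm\epsilon(\theta))^\top\dot{\bm\epsilon}(\theta))^2\mid\bm\epsilon(\theta)\big]=\|\nabla f(\bm\epsilon(\theta))\|^2\le L^2$, where the last bound follows because an $L$-Lipschitz differentiable $f$ has $\|\nabla f\|\le L$ almost everywhere. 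Collecting the two factors of $\tfrac{\pi}{2}$ gives $\mathbb{E}_{\bm\epsilon,\bm\epsilon'}[(f(\bm\epsilon)-f(\bm\epsilon'))^2]\le\tfrac{\pi}{2}\cdot\tfrac{\pi}{2}L^2=\tfrac{\pi^2}{4}L^2$, which combined with the first display yields the claim. The main obstacle I anticipate is the measure-theoretic bookkeeping: justifying the interchange of expectation, differentiation, and the $\theta$-integral (Fubini/dominated convergence, using Lipschitz control as the dominating bound) and verifying the joint-independence claim for $(\bm\epsilon(\theta),\dot{\bm\epsilon}(\theta))$ cleanly; the constant-tracking itself is routine once the coupling is set up.
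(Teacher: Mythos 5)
Your proof is correct and reaches the stated constant, but it takes a genuinely different route from the paper's at the decisive step. Both arguments share the same coupling: symmetrize with an independent copy, interpolate along $\bm\epsilon\sin\theta+\bm\epsilon'\cos\theta$, and exploit that $(\bm\epsilon(\theta),\dot{\bm\epsilon}(\theta))$ is a pair of independent standard Gaussians — this is exactly the paper's Lemma \ref{lem:ineq}, stated there for a general convex $\phi$. Where you diverge is in how the resulting quantity $\mathbb{E}\bigl[\bigl(\tfrac{\pi}{2}\langle\nabla f(\bm\epsilon),\bm\eta\rangle\bigr)^2\bigr]$ is controlled. The paper, when it specializes to $\phi(x)=x^2$, bounds $|\langle\nabla f,\bm\eta\rangle|\le L\|\bm\eta\|$ and pays $\mathbb{E}\|\bm\eta\|^2=d_z$, obtaining only the dimension-dependent bound $\pi^2L^2d_z/4$; to remove the $d_z$ it then switches to the exponential moment $\phi(x)=e^{\lambda x}$, shows the centered variable is sub-Gaussian with parameter $\pi L/2$ (Lemma \ref{lem:Gaussian}), and extracts the variance from the moment generating function by a Taylor expansion (Lemma \ref{lem:Var}). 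You instead condition on $\bm\epsilon(\theta)$ and use the exact identity $\mathbb{E}[(\langle v,\bm\eta\rangle)^2]=\|v\|^2$ for $\bm\eta\sim\mathcal{N}(0,\mathbf{I})$ independent of $v$, which gives $\le L^2$ with no dimension factor — so the plain second-moment computation already yields the dimension-free constant and the sub-Gaussian detour is unnecessary for Theorem \ref{thm:Var} itself. Your route is shorter and more elementary; the paper's route additionally buys the MGF bound needed for the tail estimate of Corollary \ref{cor:tail}, which your argument does not produce. Two minor remarks: your symmetrization step is an identity up to a factor of two, $\mathbb{E}[(f(\bm\epsilon)-f(\bm\epsilon'))^2]=2\,\mathrm{Var}(f(\bm\epsilon))$, so your computation in fact proves the sharper bound $\mathrm{Var}(f(\bm\epsilon))\le\pi^2L^2/8$ (consistent with the paper's own remark that the constant can be improved); and the Fubini/dominated-convergence interchanges you flag are indeed routine under the Lipschitz domination you describe.
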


The proof of Theorem \ref{thm:Var} (see supplementary) employs the properties of sub-Gaussian distributions and the duplication trick that are commonly used in learning theory. Significantly, the result implies a variance bound independent of the dimensionality of Gaussian variable. Note that from the proof, we can only obtain the $\mathbb{E}\left[e^{\lambda(f(\bm\epsilon)-\mathbb{E}[f(\bm\epsilon)])}\right]\leq e^{L^2\lambda^2\pi^2/8}$ for $\lambda>0$. Though this result is enough to illustrate the variance independence of $d_z$, we can in fact tighten it to a sharper upper bound by a constant scalar, i.e. $e^{\lambda^2L^2/2}$, thus leading to the result of Theorem \ref{thm:Var} with $\mathrm{Var}(f(\bm\epsilon))\leq L^2$. If all the results above hold for smooth (twice continuous and differentiable) functions with Lipschitz constant $L$ then it holds for all Lipschitz functions by a standard approximation argument.  This means the condition can be relaxed to Lipschitz continuous function. 

\begin{cor}[\textbf{Bias Bound}]\label{cor:tail}
$\mathbb{P}\left(\left|\frac{1}{M}\sum_{m=1}^Mf(\bm\epsilon_m)-\mathbb{E}[f(\bm\epsilon)]\right|\geq t\right)\leq 2e^{-\frac{2Mt^2}{\pi^2L^2}}$ .
\end{cor}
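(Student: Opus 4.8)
The plan is to recognize this corollary as a standard sub-Gaussian concentration (Chernoff) bound, built directly on the moment generating function (MGF) estimate produced while proving Theorem~\ref{thm:Var}. Recall that that proof establishes
\[
\mathbb{E}\left[e^{\lambda(f(\bm\epsilon)-\mathbb{E}[f(\bm\epsilon)])}\right]\leq e^{L^2\lambda^2\pi^2/8}
\]
for every $\lambda>0$; that is, the centered variable $X := f(\bm\epsilon)-\mathbb{E}[f(\bm\epsilon)]$ is sub-Gaussian with variance proxy $L^2\pi^2/4$. First I would set $X_m := f(\bm\epsilon_m)-\mathbb{E}[f(\bm\epsilon)]$ for the i.i.d.\ draws $\bm\epsilon_1,\dots,\bm\epsilon_M$, so that $\sum_m X_m$ is a sum of independent centered sub-Gaussian variables and $\frac{1}{M}\sum_m f(\bm\epsilon_m)-\mathbb{E}[f(\bm\epsilon)]=\frac{1}{M}\sum_m X_m$.

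Next I would bound the upper tail by the Chernoff method. For any $\lambda>0$, Markov's inequality applied to the exponential gives
\[
\mathbb{P}\left(\tfrac{1}{M}\sum_{m=1}^M X_m\geq t\right)\leq e^{-\lambda M t}\,\mathbb{E}\left[e^{\lambda\sum_m X_m}\right]=e^{-\lambda M t}\prod_{m=1}^M\mathbb{E}\left[e^{\lambda X_m}\right]\leq e^{-\lambda M t+M L^2\lambda^2\pi^2/8},
\]
where independence factorizes the MGF and the Theorem~\ref{thm:Var} estimate bounds each factor. I would then optimize the exponent over $\lambda>0$: the quadratic $-\lambda t+L^2\lambda^2\pi^2/8$ is minimized at $\lambda^*=4t/(L^2\pi^2)$, and substituting yields exponent $-2Mt^2/(\pi^2L^2)$, so that $\mathbb{P}(\frac{1}{M}\sum_m X_m\geq t)\leq e^{-2Mt^2/(\pi^2L^2)}$.

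Finally I would handle the lower tail and assemble the two-sided statement. Since $-f$ is also $L$-Lipschitz, the identical MGF estimate, and hence the identical Chernoff bound, applies to $-X_m$, giving $\mathbb{P}(\frac{1}{M}\sum_m X_m\leq -t)\leq e^{-2Mt^2/(\pi^2L^2)}$; a union bound over the two one-sided events produces the factor of $2$ and completes the proof. The only genuinely delicate point is this symmetry step: the MGF bound from Theorem~\ref{thm:Var} is stated only for $\lambda>0$, so rather than extending it to negative $\lambda$ directly I would invoke it for $-f$, whose Lipschitz constant is again $L$. Everything else, namely the factorization, the Chernoff optimization, and the union bound, is routine, so I do not expect any real obstacle beyond recording this reflection argument carefully.
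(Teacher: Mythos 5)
Your proposal is correct and follows essentially the same route as the paper: both apply the Chernoff--Markov argument to $\sum_m f(\bm\epsilon_m)-M\mathbb{E}[f(\bm\epsilon)]$, factorize the moment generating function by independence, invoke the sub-Gaussian bound $\mathbb{E}[e^{\lambda(f(\bm\epsilon)-\mathbb{E}[f(\bm\epsilon)])}]\leq e^{L^2\lambda^2\pi^2/8}$ established for Theorem~\ref{thm:Var}, and optimize at $\lambda=4t/(\pi^2L^2)$ to obtain the one-sided exponent $-2Mt^2/(\pi^2L^2)$ before symmetrizing for the factor of $2$. Your explicit handling of the lower tail via $-f$ (also $L$-Lipschitz) is a slightly more careful rendering of the paper's remark that ``the other side can apply the same trick,'' but it is the same argument.
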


It is also worth mentioning that the significant corollary of Theorem \ref{thm:Var} is probabilistic inequality to measure the convergence rate of Monte Carlo approximation in our setting. This tail bound, together with variance bound, provides the theoretical guarantee for stochastic backpropagation on Gaussian variables and provides an explanation for why a unique realization ($M=1$) is enough in practice. By reparametrization, Eq. (\ref{eq:grad}), (\ref{eq:hess}, (\ref{eq:hv}) can be formulated as the expectation w.r.t the isotropic Gaussian distribution with identity covariance matrix leading to Algorithm \ref{Alg:SecondOrder}. Thus we can rein in the number of samples for Monte Carlo integration regardless dimensionality of latent variables $\mathbf{z}$. This seems counter-intuitive. However, we notice that larger $L$ may require more samples, and Lipschitz constants of different models vary greatly. 

\section{Application on Variational Auto-encoder}
\label{sec:App}

Note that our method is model free. If the loss function has the form of the expectation of a function w.r.t latent Gaussian variables, we can directly use Algorithm \ref{Alg:SecondOrder}. In this section, we put the emphasis on a standard framework VAE model \cite{kingma2014auto} that has been intensively researched; in particular, the function endows the logarithm form, thus bridging the gap between Hessian and fisher information matrix by expectation (see a survey \cite{pascanu2013revisiting} and reference therein).
 
\subsection{Model Description}

Suppose we have $N$ i.i.d. observations $ \mathbf{X} = \{\mathbf{x}^{(i)}\}_{i=1}^N$, where $\mathbf{x}^{(i)} \in \mathbb{R}^D$ is a data vector that can take either continuous or discrete values. In contrast to a standard auto-encoder model constructed by a neural network with a bottleneck structure, VAE describes the embedding process from the prospective of a Gaussian latent variable model. Specifically, each data point $\mathbf{x}$ follows a generative model $p_{\bm\psi}(\mathbf{x}|\mathbf{z})$, where this process is actually a decoder that is usually constructed by a non-linear transformation with unknown parameters $\bm\psi$ and a prior distribution $p_{\bm\psi}(\mathbf{z})$. The encoder or recognition model $q_{\bm\phi}(\mathbf{z}|\mathbf{x})$ is used to approximate the true posterior $p_{\bm\psi}(\mathbf{z}|\mathbf{x})$, where $\bm\phi$ is similar to the parameter of variational distribution. As suggested in \cite{kingma2014semi,kingma2014auto, rezende2014stochastic}, multi-layered perceptron (MLP) is commonly considered as both the probabilistic encoder and decoder. We will later see that this construction is equivalent to a variant deep neural networks under the constrain of unique realization for $\mathbf{z}$. For this model and each datapoint, the variational lower bound on the marginal likelihood is, 
\begin{eqnarray}\label{eq:LB}
\log p_{\bm\psi}(\mathbf{x}^{(i)})  &\ge & \mathbb{E}_{q_{\bm\phi}(\mathbf{z}|\mathbf{x}^{(i)})} [ \log p_{\bm\psi}(\mathbf{x}^{(i)} |\mathbf{z}) ] - D_{KL} ( q_{\bm\phi}(\mathbf{z}|\mathbf{x}^{(i)}) \| p_{\bm\psi}(\mathbf{z})) = \mathcal{L}(\mathbf{x}^{(i)}) .
\end{eqnarray} 

We can actually write the KL divergence into the expectation term and denote $(\bm\psi,\bm\phi)$ as $\bm\theta$. By the previous discussion, this means that our objective is to solve the optimization problem $\arg\max_{\bm\theta}\sum_i \mathcal{L}(\mathbf{x}^{(i)})$ of full dataset variational lower bound. Thus L-BFGS or HF SGVI algorithm can be implemented straightforwardly to estimate the parameters of both generative and recognition models. Since the first term of reconstruction error appears in Eq. (\ref{eq:LB}) with an expectation form on latent variable, \cite{kingma2014auto, rezende2014stochastic} used a small finite number $M$ samples as Monte Carlo integration with reparameterization trick to reduce the variance.  This is, in fact, drawing samples from the standard normal distribution. In addition, the second term is the KL divergence between the variational distribution and the prior distribution, which acts as a regularizer. 

\subsection{Deep Neural Networks with Hybrid Hidden Layers}

In the experiments, setting $M=1$ can not only achieve excellent performance but also speed up the program. In this special case, we discuss the relationship between VAE and traditional deep auto-encoder. For binary inputs, denote the output as $\mathbf{y}$, we have $\log p_{\bm\psi}(\mathbf{x} |\mathbf{z}) = \sum_{j=1}^D x_j\log y_j + (1-x_j)\log(1-y_j)$, which is exactly the negative cross-entropy. It is also apparent that $\log p_{\bm\psi}(\mathbf{x} |\mathbf{z})$ is equivalent to negative squared error loss for continuous data. This means that maximizing the lower bound is roughly equal to minimizing the loss function of a deep neural network (see Figure 1 in supplementary), except for different regularizers. In other words, the prior in VAE only imposes a regularizer in encoder or generative model, while $\mathcal{L}_2$ penalty for all parameters is always considered in deep neural nets. From the perspective of deep neural networks with hybrid hidden nodes, the model consists of two Bernoulli layers and one Gaussian layer. The gradient computation can simply follow a variant of backpropagation layer by layer (derivation given in supplementary). To further see the rationale of setting $M=1$, we will investigate the upper bound of the Lipschitz constant under various activation functions in the next lemma. As Theorem \ref{thm:Var} implies, the variance of approximate expectation by finite samples mainly relies on the Lipschitz constant, rather than dimensionality. According to Lemma \ref{lem:LC}, imposing a prior or regularization to the parameter can control both the model complexity and function smoothness. Lemma \ref{lem:LC} also implies that we can get the upper bound of the Lipschitz constant for the designed estimators in our algorithm. 
\begin{lem}\label{lem:LC}
For a sigmoid activation function $g$ in deep neural networks with one Gaussian layer $\mathbf{z}$, $\mathbf{z}\sim \mathcal{N}(\bm\mu, \mathbf{C}), \mathbf{C} = \mathbf{R}^\top\mathbf{R}$. Let $\mathbf{z}=\bm\mu+\mathbf{R}\bm\epsilon$, then the Lipschitz constant of $g(W_{i,}(\bm\mu+\mathbf{R}\bm\epsilon)+b_i)$ is bounded by $\frac{1}{4}\|W_{i,}\mathbf{R}\|_2$, where $W_{i,}$ is $i$th row of weight matrix and $b_i$ is the $i$th element bias. Similarly, for hyperbolic tangent or softplus function, the Lipschitz constant is bounded by $\|W_{i,}\mathbf{R}\|_2$. 

\end{lem}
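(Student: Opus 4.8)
The plan is to read $h(\bm\epsilon) = g\big(W_{i,}(\bm\mu + \mathbf{R}\bm\epsilon) + b_i\big)$ as a composition: an affine scalar map $\phi(\bm\epsilon) = W_{i,}\mathbf{R}\bm\epsilon + (W_{i,}\bm\mu + b_i)$ followed by the one-dimensional activation $g$. Since the Lipschitz constant of a composition is at most the product of the Lipschitz constants of its factors, the whole lemma reduces to two independent estimates: the operator norm of the affine factor $\phi$, and the best Lipschitz constant $\sup_x |g'(x)|$ of each scalar activation. This cleanly separates the geometry (encoded in $W_{i,}\mathbf{R}$) from the nonlinearity.

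First I would handle the affine factor. For any $\bm\epsilon_1,\bm\epsilon_2$ the constant term $W_{i,}\bm\mu + b_i$ cancels, so $|\phi(\bm\epsilon_1) - \phi(\bm\epsilon_2)| = |W_{i,}\mathbf{R}(\bm\epsilon_1 - \bm\epsilon_2)| \le \|W_{i,}\mathbf{R}\|_2\,\|\bm\epsilon_1 - \bm\epsilon_2\|_2$ by Cauchy--Schwarz, with the point that $W_{i,}\mathbf{R}$ is a \emph{row vector} acting as a linear functional on $\bm\epsilon$, so its tight Lipschitz constant with respect to the Euclidean norm is exactly the vector $2$-norm $\|W_{i,}\mathbf{R}\|_2$. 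Hence $\phi$ is $\|W_{i,}\mathbf{R}\|_2$-Lipschitz, and it remains only to show that the sigmoid is $\tfrac14$-Lipschitz while $\tanh$ and softplus are $1$-Lipschitz.

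Second, for the scalar activations I would bound $\sup_x |g'(x)|$ by elementary calculus and then invoke the mean value estimate $|g(a)-g(b)| \le \big(\sup_x|g'(x)|\big)\,|a-b|$. For the sigmoid, $g'(x) = g(x)\big(1-g(x)\big)$ is maximized at $g(x)=\tfrac12$, giving $\sup_x g'(x) = \tfrac14$. For $\tanh$, $\tanh'(x) = 1 - \tanh^2(x) \le 1$ with equality at $x=0$; for softplus $\log(1+e^x)$, the derivative is precisely the sigmoid $1/(1+e^{-x}) \in (0,1)$, so $\sup_x|g'(x)| = 1$. Chaining these with the affine estimate yields $\tfrac14\|W_{i,}\mathbf{R}\|_2$ in the sigmoid case and $\|W_{i,}\mathbf{R}\|_2$ in the other two.

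There is no serious obstacle here; the argument is a short composition-of-Lipschitz-maps calculation, and the only places demanding care are choosing the correct norm for the affine factor (a vector $2$-norm, not a matrix norm) and verifying the sharp derivative maxima of each activation. I would additionally remark that these bounds are attained in the limit where the pre-activation concentrates at the argmax of $g'$, which is exactly the property that makes them useful when plugged into the dimension-free variance bound of Theorem \ref{thm:Var}.
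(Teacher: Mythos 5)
Your proposal is correct and follows essentially the same route as the paper: bound the scalar activation's derivative ($\tfrac14$ for sigmoid, $1$ for $\tanh$ and softplus) and compose with the Cauchy--Schwarz estimate $|W_{i,}\mathbf{R}(\bm\epsilon_1-\bm\epsilon_2)|\le\|W_{i,}\mathbf{R}\|_2\|\bm\epsilon_1-\bm\epsilon_2\|_2$ for the affine pre-activation. The only cosmetic difference is that the paper dispatches $\tanh$ via the identity $\tanh(x)=2g(2x)-1$ rather than computing $\tanh'(x)=1-\tanh^2(x)$ directly.
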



%
%

\section{Experiments}
\label{sec:Experiments}
We apply our $2^\text{nd}$ order stochastic variational inference to two different non-conjugate models. First, we consider a simple but widely used Bayesian logistic regression model, and compare with the most recent $1^\text{st}$ order algorithm, doubly stochastic variational inference (DSVI) \cite{titsias2014doubly}, designed for sparse variable selection with logistic regression. Then, we compare the performance of VAE model with our algorithms. 

\subsection{Bayesian Logistic Regression}
Given a dataset $\{\mathbf{x}_i,y_i\}_{i=1}^N$, where each instance $\mathbf{x}_i\in\mathbb{R}^D$ includes the default feature 1 and $y_i\in\{-1,1\}$ is the binary label, the Bayesian logistic regression models the probability of outputs conditional on features and the coefficients $\bm\beta$ with an imposed prior. The likelihood and the prior can usually take the form as $\prod_{i=1}^N g(y_i\mathbf{x}_i^\top\bm\beta)$ and $\mathcal{N}(0,\bm\Lambda)$ respectively, where $g$ is sigmoid function and $\bm\Lambda$ is a diagonal covariance matrix for simplicity. We can propose a variational Gaussian distribution $q(\bm\beta|\bm\mu,\mathbf{C})$ to approximate the posterior of regression parameter. If we further assume a diagonal $\mathbf{C}$, a factorized form $\prod_{j=1}^D q(\beta_j|\mu_j,\sigma_j)$ is both efficient and practical for inference. Unlike iteratively optimizing $\bm\Lambda$ and $\bm\mu,\mathbf{C}$ as in variational EM, \cite{titsias2014doubly} noticed that the calculation of the gradient w.r.t the lower bound indicates the updates of $\bm\Lambda$ can be analytically worked out by variational parameters, thus resulting a new objective function for the representation of lower bound that only relies on $\bm\mu,\mathbf{C}$ (details refer to \cite{titsias2014doubly}). We apply our algorithm to this variational logistic regression on three appropriate datasets: \texttt{DukeBreast} and \texttt{Leukemia} are small size but high-dimensional for sparse logistic regression, and \texttt{a9a} which is large. See Table \ref{tab:error} for additional dataset descriptions.

Fig. \ref{fig:logreg} shows the convergence of Gaussian variational lower bound for Bayesian logistic regression in terms of running time. It is worth mentioning that the lower bound of HFSGVI converges within 3 iterations on the small datasets \texttt{DukeBreast} and \texttt{Leukemia}. This is because all data points are fed to all algorithms and the HFSGVI uses a better approximation of the Hessian matrix to proceed $2^\text{nd}$ order optimization. L-BFGS-SGVI also take less time to converge and yield slightly larger lower bound than DSVI. In addition, as an SGD-based algorithm, it is clearly seen that DSVI is less stable for small datasets and fluctuates strongly even at the later optimized stage. For the large \texttt{a9a}, we observe that HFSGVI also needs 1000 iterations to reach a good lower bound and becomes less stable than the other two algorithms. However, L-BFGS-SGVI performs the best both in terms of convergence rate and the final lower bound. The misclassification report in Table \ref{tab:error} reflects the similar advantages of our approach, indicating a competitive predication ability on various datasets. Finally, it is worth mentioning that all three algorithms learn a set of very sparse regression coefficients on the three datasets (see supplement for additional visualizations). 

\begin{table}[t]
\centering
\caption{Comparison on number of misclassification}
\setlength{\tabcolsep}{4pt}
\begin{tabular}{c|c|c|c|c|c|c} 
\hline
\multirow{2}{*}{\textbf{Dataset(size: \#train/test/feature)}} & \multicolumn{2}{c|}{DSVI} & \multicolumn{2}{c|}{L-BFGS-SGVI} & \multicolumn{2}{c}{HFSGVI} \\ 
\cline{2-7} & \textbf{train} & \textbf{test} &  \textbf{train} & \textbf{test} &  \textbf{train} & \textbf{test} \\ \hline
\texttt{DukeBreast(38/4/7129)} & 0 & 2 & 0 & 1 & 0 & 0 \\
\texttt{Leukemia(38/34/7129)} & 0 & 3 & 0 & 3 & 0 & 3  \\
\texttt{A9a(32561/16281/123)} & 4948 & 2455 & 4936 & 2427 & 4931 & 2468 \\
\hline
\end{tabular}
\label{tab:error}
\end{table}

\begin{figure}[t]
\centering
\subfigure{
\includegraphics[width=43mm]{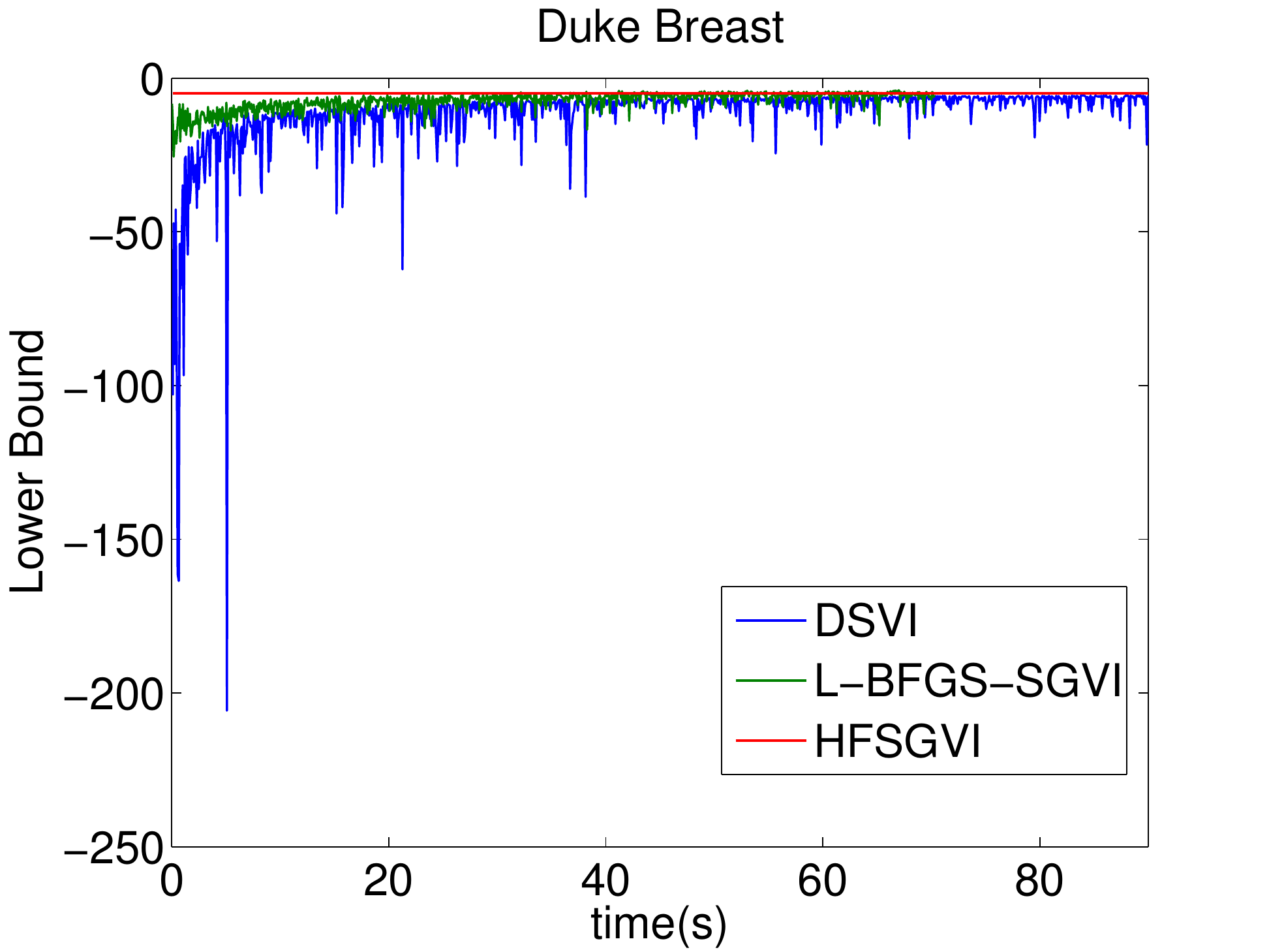}
}
\subfigure{
\includegraphics[width=43mm]{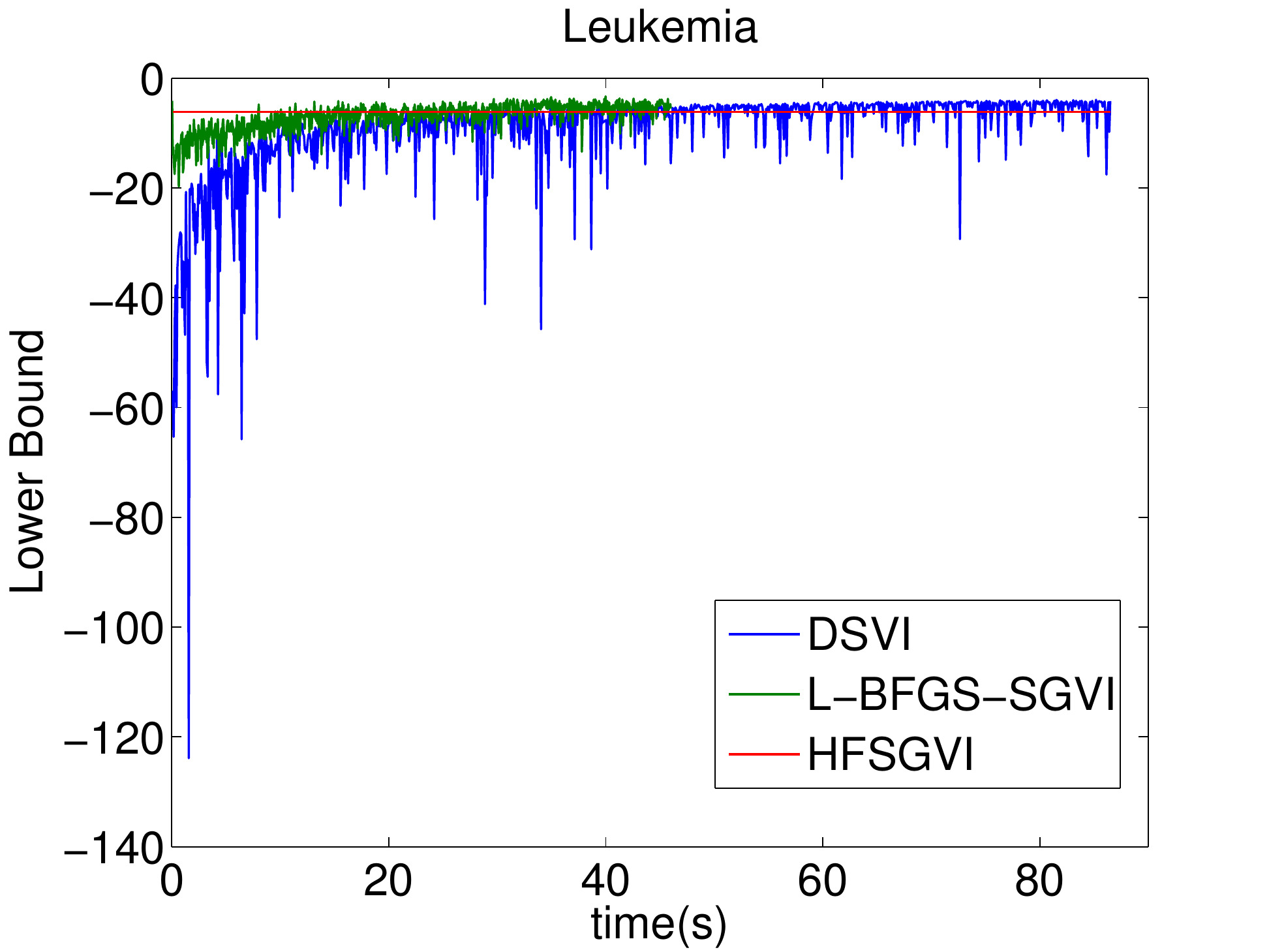}
}
\subfigure{
\includegraphics[width=43mm]{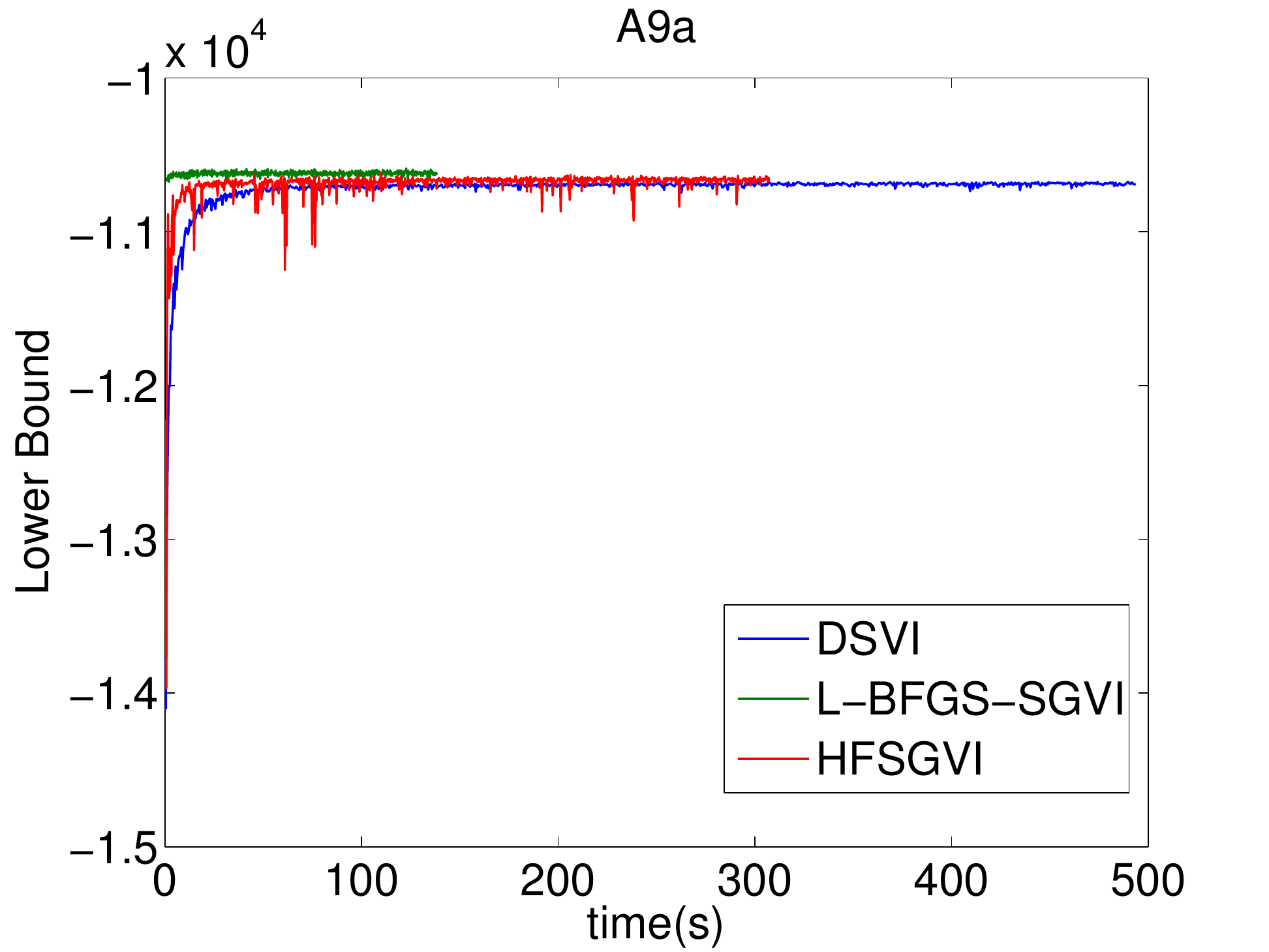}
}
\caption{Convergence rate on logistic regression (zoom out or see larger figures in supplementary)}
\centering
\label{fig:logreg}
\end{figure}

\subsection{Variational Auto-encoder}
We also apply the $2^\text{nd}$ order stochastic variational inference to train a VAE model (setting $M=1$ for Monte Carlo integration to estimate expectation) or the equivalent deep neural networks with hybrid hidden layers. The datasets we used are images from the Frey Face, Olivetti Face and MNIST. We mainly learned three tasks by maximizing the variational lower bound: parameter estimation, images reconstruction and images generation. Meanwhile, we compared the convergence rate (running time) of three algorithms, where in this section the compared SGD is the Ada version \cite{duchi2011adaptive} that is recommended for VAE model in \cite{kingma2014auto,rezende2014stochastic}. The experimental setting is as follows. The initial weights are randomly drawn from $\mathcal{N}(\mathbf{0},0.01^2\mathbf{I})$ or $\mathcal{N}(\mathbf{0},0.001^2\mathbf{I})$, while all bias terms are initialized as 0. The variational lower bound only introduces the regularization on the encoder parameters, so we add an $\mathcal{L}_2$ regularizer on decoder parameters with a shrinkage parameter $0.001$ or $0.0001$. The number of hidden nodes for encoder and decoder is the same for all auto-encoder model, which is reasonable and convenient to construct a symmetric structure. The number is always tuned from 200 to 800 with 100 increment. The mini-batch size is 100 for L-BFGS and Ada, while larger mini-batch is recommended for HF, meaning it should vary according to the training size. 

The detailed results are shown in Fig. \ref{fig:freyface} and \ref{fig:other}. Both Hessian-free and L-BFGS converge faster than Ada in terms of running time. HFSGVI also performs competitively with respet to generalization on testing data. Ada takes at least four times as long to achieve similar lower bound. Theoretically, Newton's method has a quadratic convergence rate in terms of iteration, but with a cubic algorithmic complexity at each iteration. However, we manage to lower the computation in each iteration to linear complexity. Thus considering the number of evaluated training data points, the $2^\text{nd}$ order algorithm needs much fewer step than $1^\text{st}$ order gradient descent (see visualization in supplementary on MNIST). The Hessian matrix also replaces manually tuned learning rates, and the affine invariant property allows for automatic learning rate adjustment. Technically, if the program can run in parallel with GPU, the speed advantages of $2^\text{nd}$ order algorithm should be more obvious \cite{ngiam2011optimization}. 

Fig. \ref{fig:freyface}(b) and Fig. \ref{fig:other}(b) are reconstruction results of input images. From the perspective of deep neural network, the only difference is the Gaussian distributed latent variables $\mathbf{z}$. By corollary of Theorem \ref{thm:Var}, we can roughly tell the mean $\bm\mu$ is able to represent the quantity of $\mathbf{z}$, meaning this layer is actually a linear transformation with noise, which looks like dropout training \cite{dahl2013improving}. Specifically, Olivetti includes 64$\times$64 pixels faces of various persons, which means more complicated models or preprocessing \cite{hinton2006reducing} (e.g. nearest neighbor interpolation, patch sampling) is needed. However, even when simply learning a very bottlenecked auto-encoder, our approach can achieve acceptable results. Note that although we have tuned the hyperparameters of Ada by cross-validation, the best result is still a bunch of mean faces. For manifold learning, Fig. \ref{fig:freyface}(c) represents how the learned generative model can simulate the images by HFSGVI. To visualize the results, we choose the 2D latent variable $\mathbf{z}$ in $p_{\bm\psi}(\mathbf{x}|\mathbf{z})$, where the parameter $\bm\psi$ is estimated by the algorithm. The two coordinates of $\mathbf{z}$ take values that were transformed through the inverse CDF of the Gaussian distribution from equal distance grid (10$\times$10 or 20$\times$20) on the unit square. Then we merely use the generative model to simulate the images. Besides these learning tasks, denoising, imputation \cite{rezende2014stochastic} and even generalizing to semi-supervised learning \cite{kingma2014semi} are possible application of our approach.

\begin{figure}[t]
\centering
\subfigure[Convergence]{
\includegraphics[width=47mm]{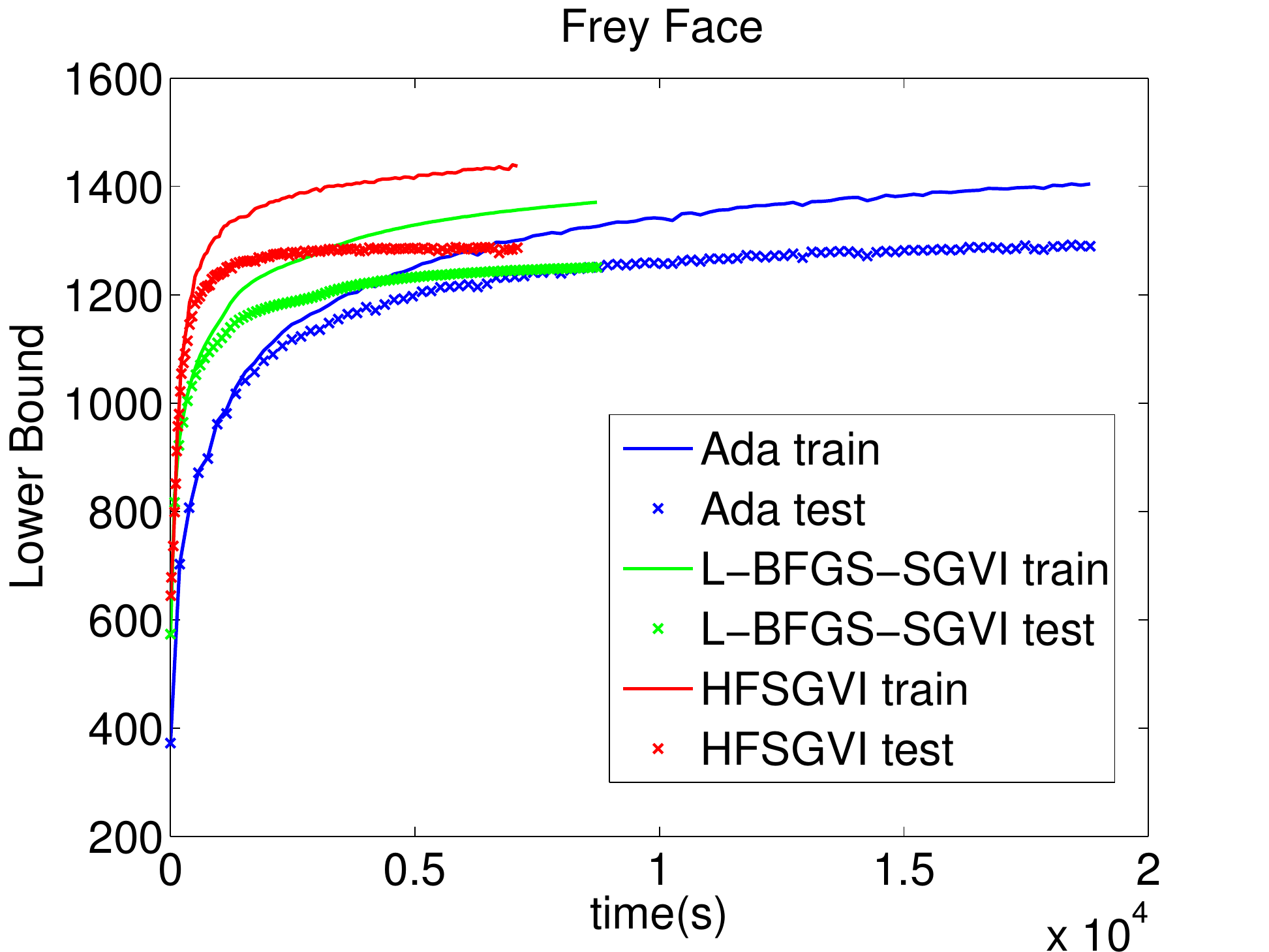}
}
\subfigure[Reconstruction]{
\includegraphics[width=26mm]{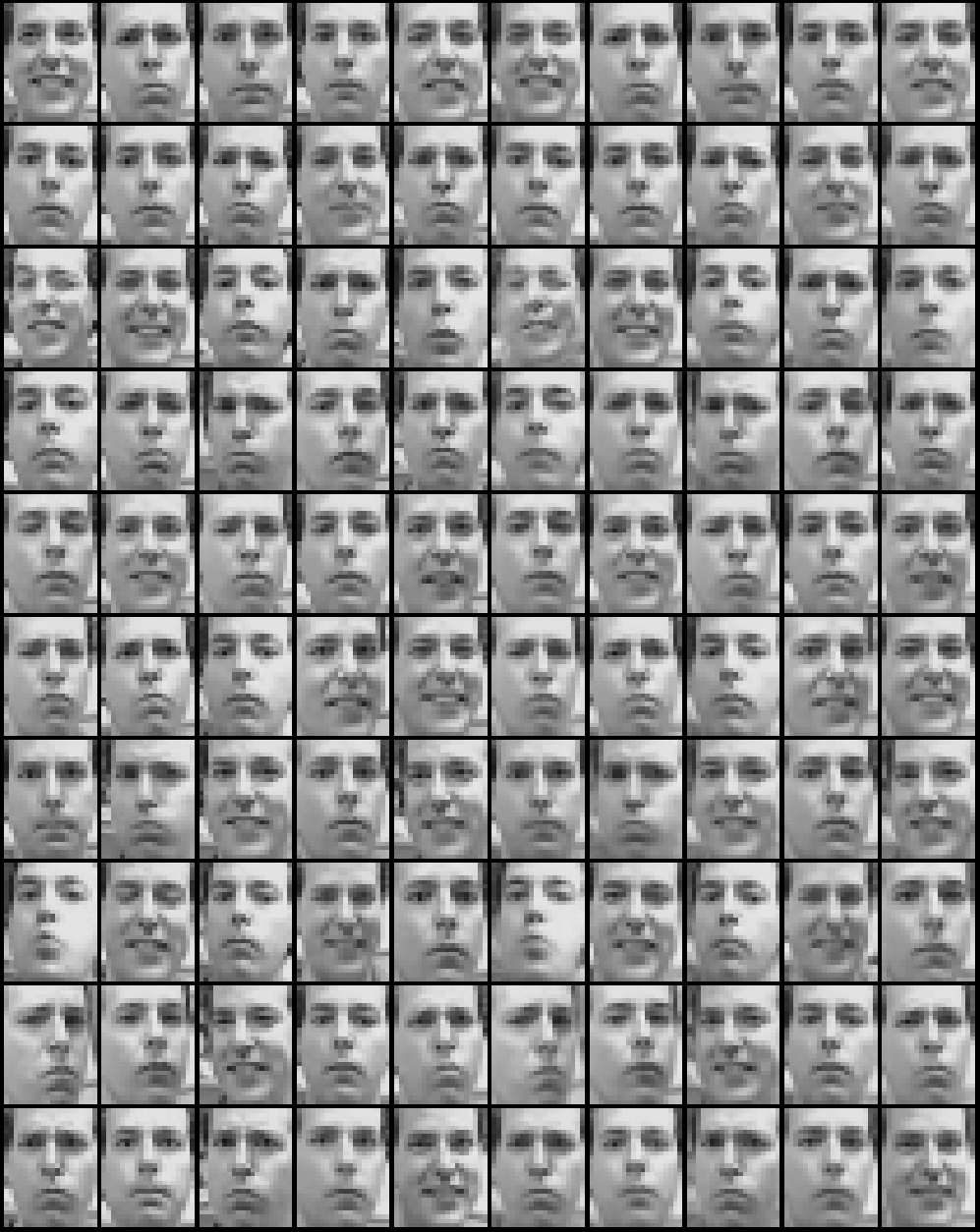}
}
\subfigure[Manifold by Generative Model]{
\includegraphics[width=25.5mm]{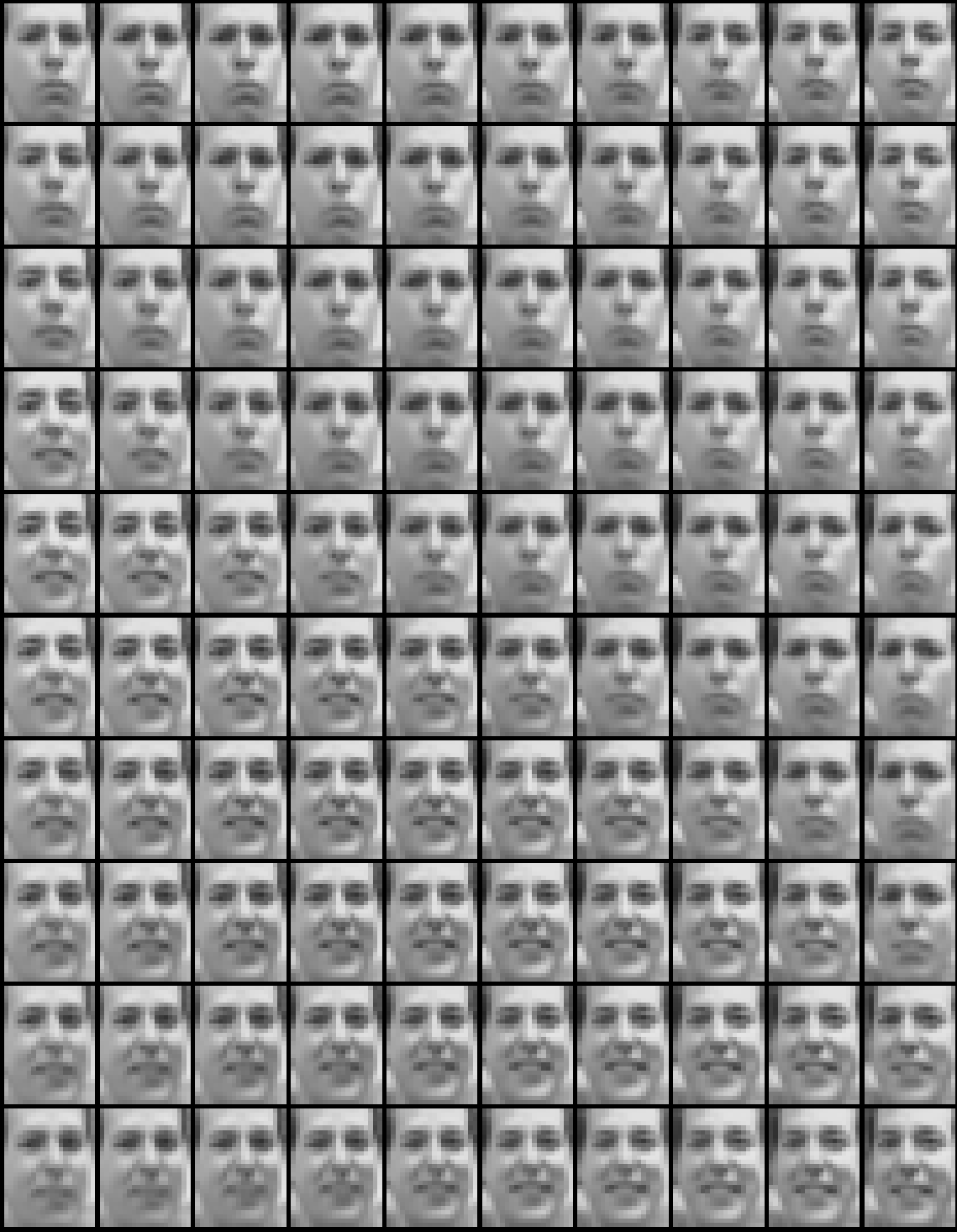}
\includegraphics[width=33mm]{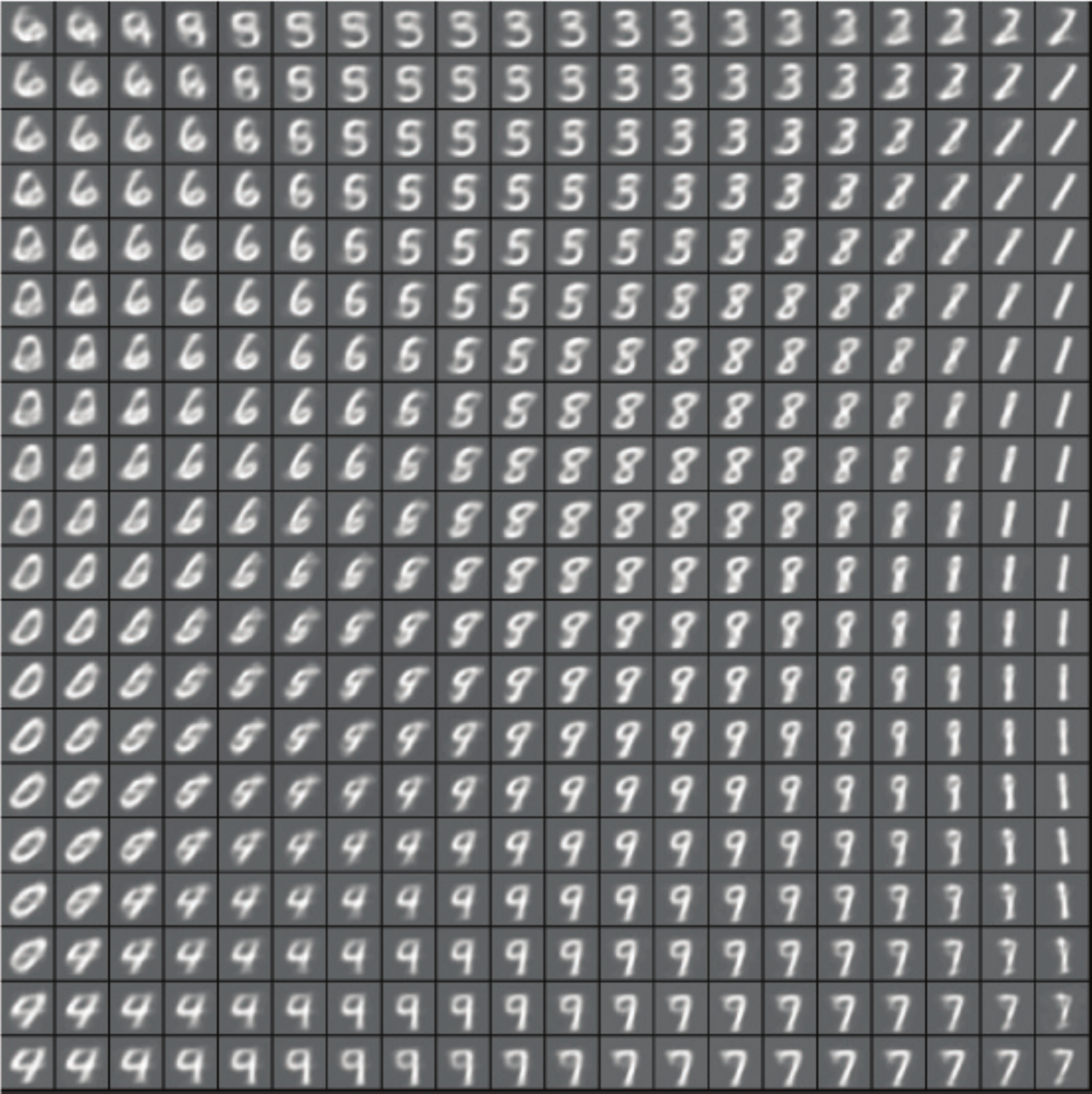}
}
\caption{(a) shows how lower bound increases w.r.t program running time for different algorithms; (b) illustrates the reconstruction ability of this auto-encoder model when $d_z=20$ (left 5 columns are randomly sampled from dataset); (c) is the learned manifold of generative model when $d_z=2$.}
\centering
\label{fig:freyface}
\end{figure}

\begin{figure}[t]
\centering
\subfigure[Convergence]{
\includegraphics[width=47mm]{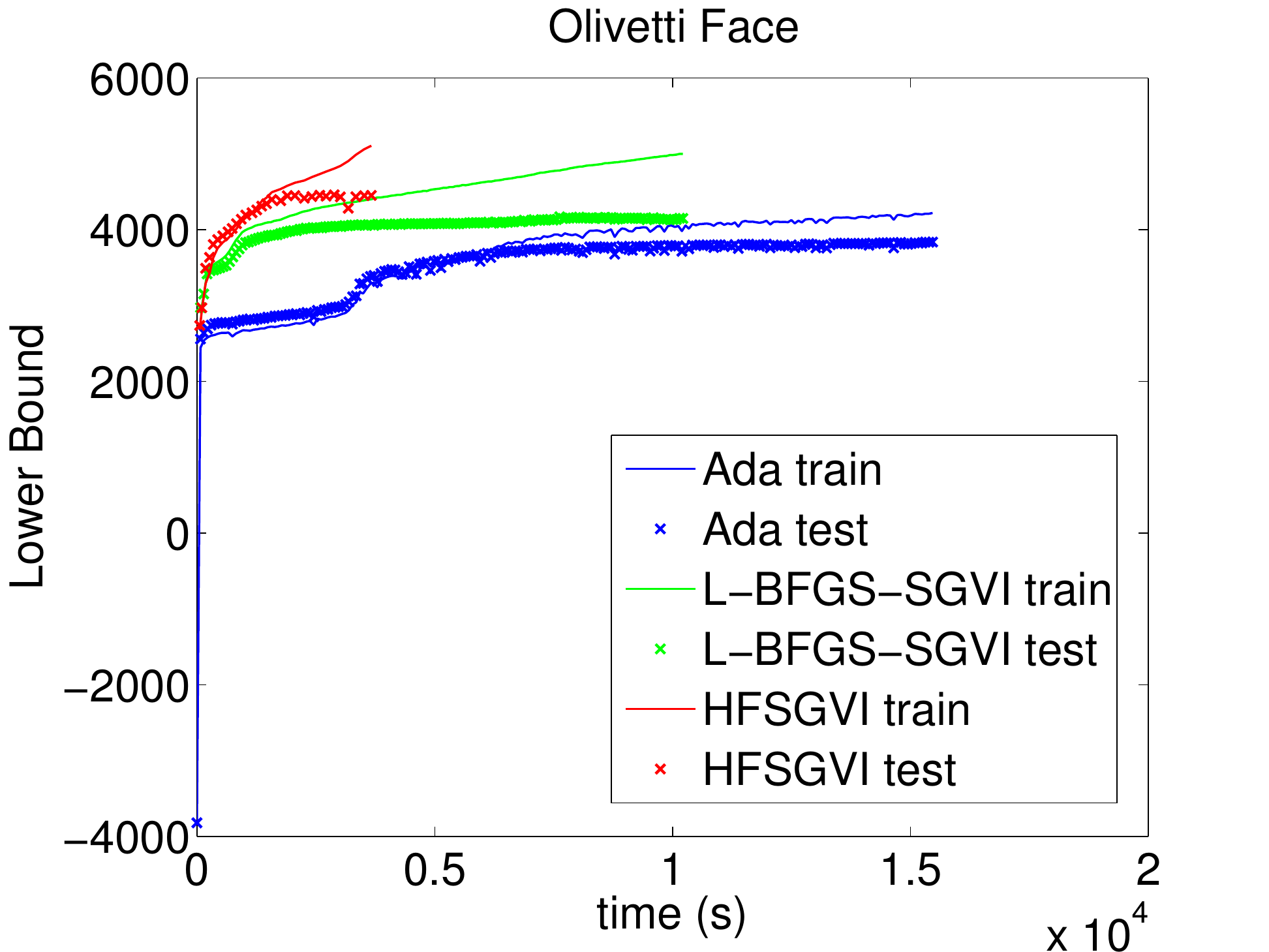}
}
\subfigure[HFSGVI v.s L-BFGS-SGVI v.s. Ada-SGVI]{
\includegraphics[width=28mm,height=33mm]{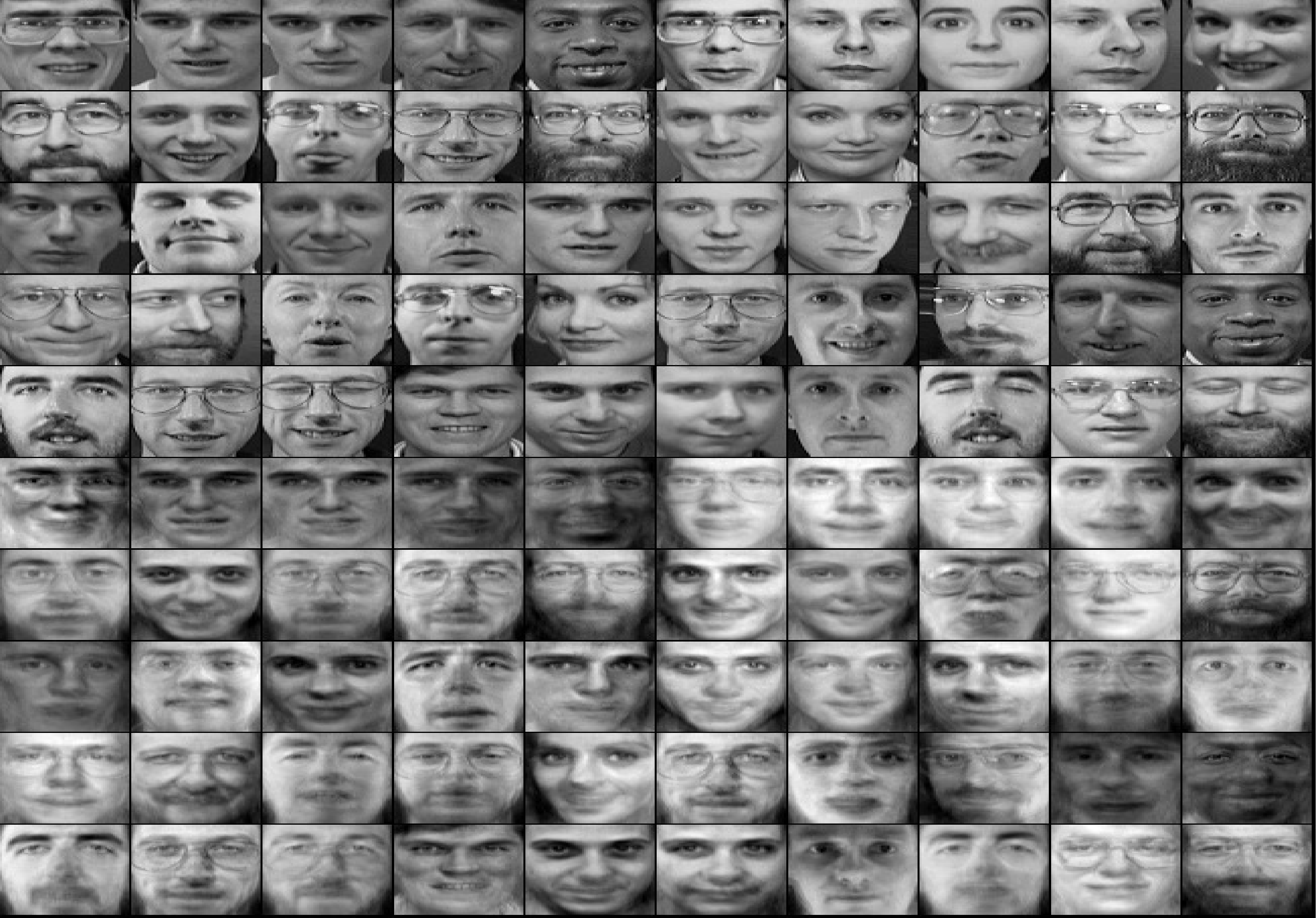}
\includegraphics[width=28mm,height=33mm]{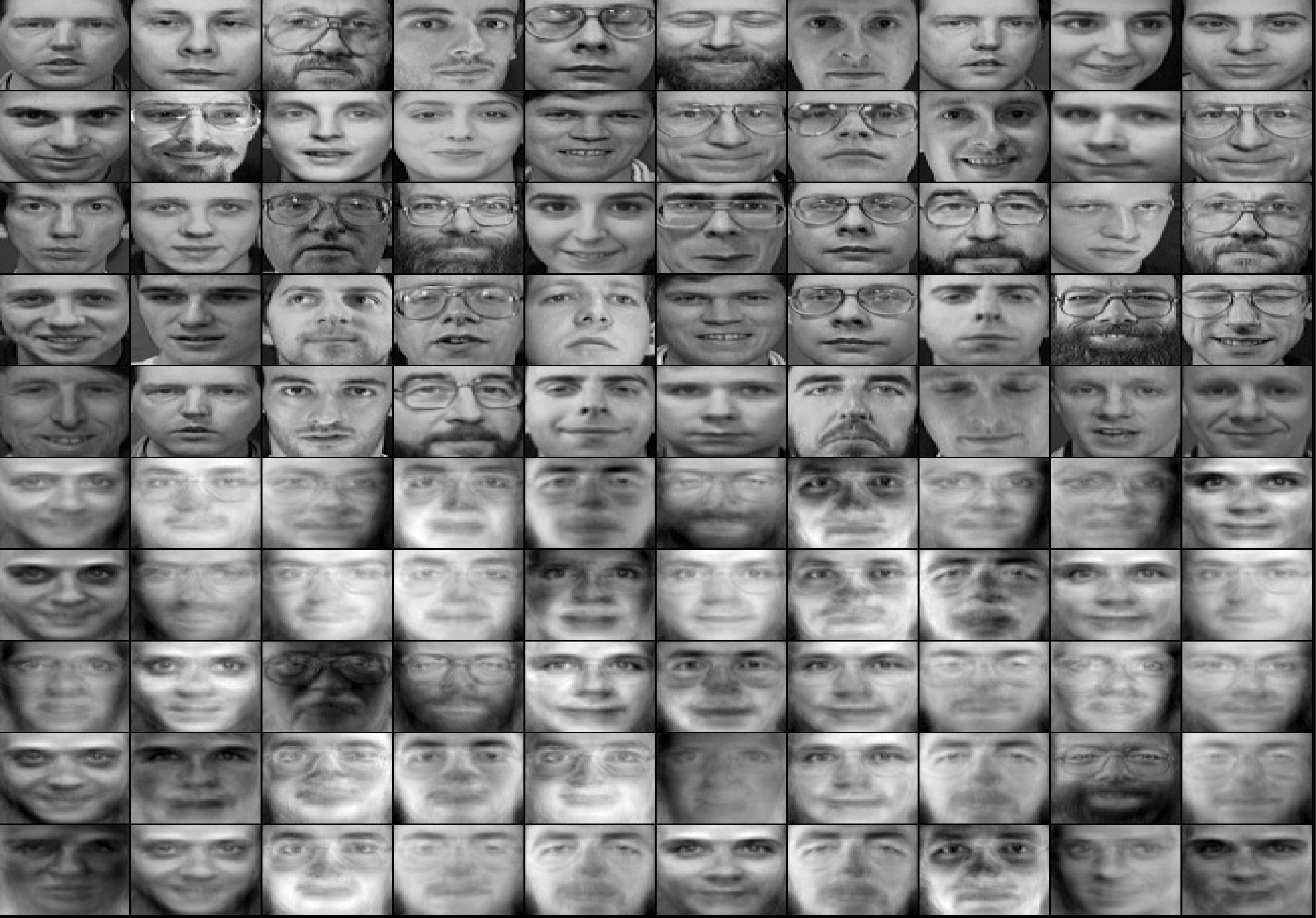}
\includegraphics[width=28mm,height=33mm]{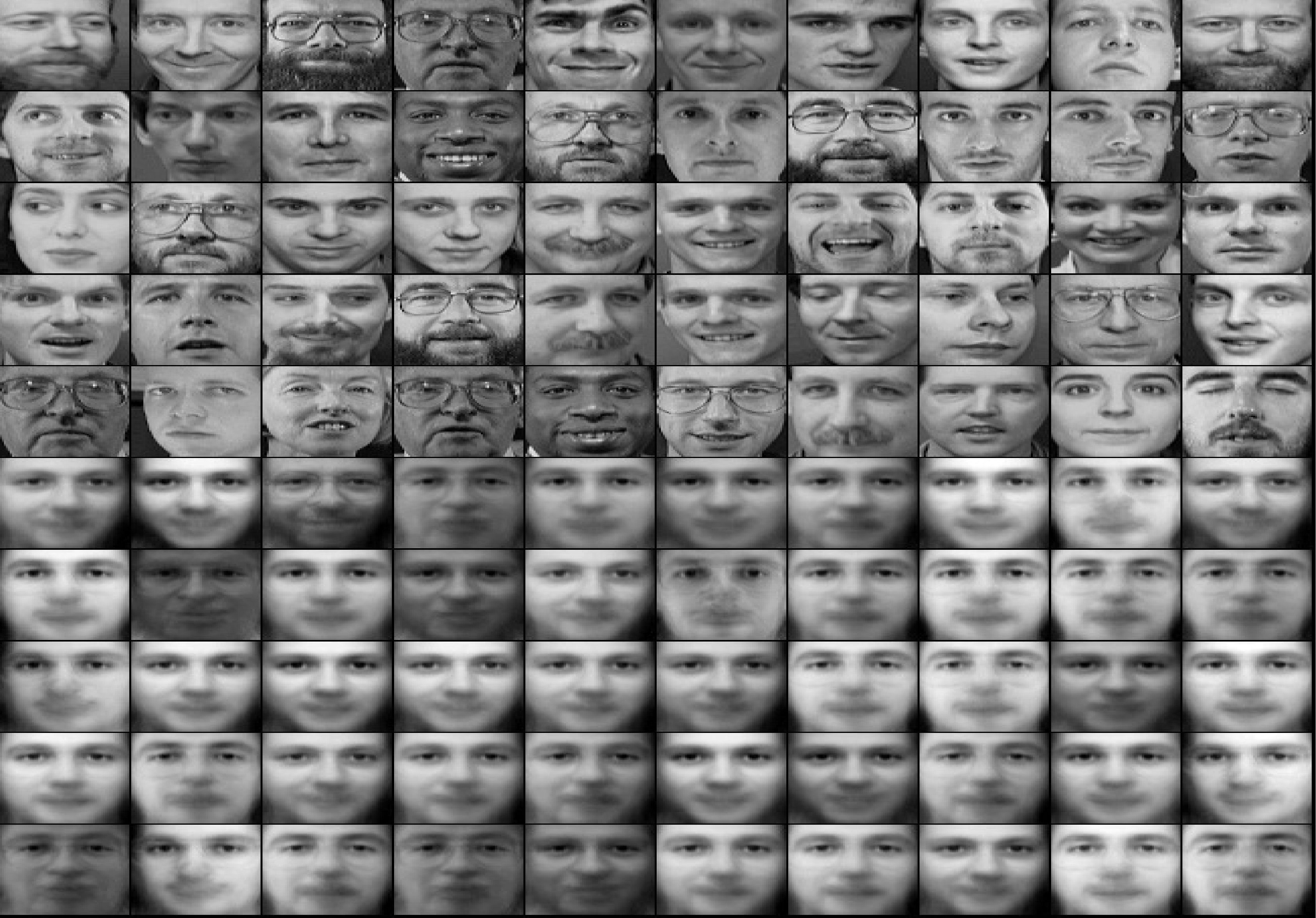}
}
\caption{(a) shows running time comparison; (b) illustrates reconstruction comparison \textbf{without} patch sampling, where $d_z=100$: top 5 rows are original faces.}
\centering
\label{fig:other}
\end{figure} 

\section{Conclusions and Discussion}
In this paper we proposed a scalable $2^\text{nd}$ order stochastic variational method for generative models with continuous latent variables. By developing Gaussian backpropagation through reparametrization we introduced an efficient unbiased estimator for higher order gradients information. Combining with the efficient technique for computing Hessian-vector multiplication, we derived an efficient inference algorithm (HFSGVI) that allows for joint optimization of all parameters. The algorithmic complexity of each parameter update is quadratic w.r.t the dimension of latent variables for both $1^\text{st}$ and $2^\text{nd}$ derivatives. Furthermore, the overall computational complexity of our $2^\text{nd}$ order SGVI is linear w.r.t the number of parameters in real applications just like SGD or Ada. However, HFSGVI may not behave as fast as Ada in some situations, e.g., when the pixel values of images are sparse due to fast matrix multiplication implementation in most softwares.

Future research will focus on some difficult deep models such as RNNs \cite{gregor2015draw,sutskever2014sequence} or Dynamic SBN \cite{gan2015deep}. Because of conditional independent structure by giving sampled latent variables, we may construct blocked Hessian matrix to optimize such dynamic models. Another possible area of future work would be reinforcement learning (RL) \cite{mnih2015human}. Many RL problems can be reduced to compute gradients of expectations (e.g., in policy gradient methods) and there has been series of exploration in this area for natural gradients. However, we would suggest that it might be interesting to consider where stochastic backpropagation fits in our framework and how $2^\text{nd}$ order computations can help.

\paragraph{Acknolwedgement}
This research was supported in part by the Research Grants Council of the Hong Kong Special Administrative Region (Grant No. 614513).


\small{
\bibliography{nips2015}

\begin{thebibliography}{10}

\bibitem{beal2003variational}
Matthew~James Beal.
\newblock {\em Variational algorithms for approximate Bayesian inference}.
\newblock PhD thesis, 2003.

\bibitem{blei2003latent}
David~M Blei, Andrew~Y Ng, and Michael~I Jordan.
\newblock Latent dirichlet allocation.
\newblock {\em Journal of Machine Learning Research}, 3, 2003.

\bibitem{bonnans2006numerical}
Joseph-Fr{\'e}d{\'e}ric Bonnans, Jean~Charles Gilbert, Claude Lemar{\'e}chal,
  and Claudia~A Sagastiz{\'a}bal.
\newblock {\em Numerical optimization: theoretical and practical aspects}.
\newblock Springer Science \& Business Media, 2006.

\bibitem{bonnet1964transformations}
Georges Bonnet.
\newblock Transformations des signaux al{\'e}atoires a travers les syst{\`e}mes
  non lin{\'e}aires sans m{\'e}moire.
\newblock {\em Annals of Telecommunications}, 19(9):203--220, 1964.

\bibitem{dahl2013improving}
George~E Dahl, Tara~N Sainath, and Geoffrey~E Hinton.
\newblock Improving deep neural networks for lvcsr using rectified linear units
  and dropout.
\newblock In {\em ICASSP}, 2013.

\bibitem{duchi2011adaptive}
John Duchi, Elad Hazan, and Yoram Singer.
\newblock Adaptive subgradient methods for online learning and stochastic
  optimization.
\newblock {\em Journal of Machine Learning Research}, 12:2121--2159, 2011.

\bibitem{erhan2010does}
Dumitru Erhan, Yoshua Bengio, Aaron Courville, Pierre-Antoine Manzagol, Pascal
  Vincent, and Samy Bengio.
\newblock Why does unsupervised pre-training help deep learning?
\newblock {\em Journal of Machine Learning Research}, 11:625--660, 2010.

\bibitem{ferguson1962location}
Thomas~S Ferguson.
\newblock Location and scale parameters in exponential families of
  distributions.
\newblock {\em Annals of Mathematical Statistics}, pages 986--1001, 1962.

\bibitem{gan2015deep}
Zhe Gan, Chunyuan Li, Ricardo Henao, David Carlson, and Lawrence Carin.
\newblock Deep temporal sigmoid belief networks for sequence modeling.
\newblock In {\em NIPS}, 2015.

\bibitem{gregor2015draw}
Karol Gregor, Ivo Danihelka, Alex Graves, and Daan Wierstra.
\newblock Draw: A recurrent neural network for image generation.
\newblock In {\em ICML}, 2015.

\bibitem{hensman2012fast}
James Hensman, Magnus Rattray, and Neil~D Lawrence.
\newblock Fast variational inference in the conjugate exponential family.
\newblock In {\em NIPS}, 2012.

\bibitem{hinton1995wake}
Geoffrey~E Hinton, Peter Dayan, Brendan~J Frey, and Radford~M Neal.
\newblock The "wake-sleep" algorithm for unsupervised neural networks.
\newblock {\em Science}, 268(5214):1158--1161, 1995.

\bibitem{hinton2006reducing}
Geoffrey~E Hinton and Ruslan~R Salakhutdinov.
\newblock Reducing the dimensionality of data with neural networks.
\newblock {\em Science}, 313(5786):504--507, 2006.

\bibitem{hoffman2013stochastic}
Matthew~D Hoffman, David~M Blei, Chong Wang, and John Paisley.
\newblock Stochastic variational inference.
\newblock {\em Journal of Machine Learning Research}, 14(1):1303--1347, 2013.

\bibitem{khan2014decoupled}
Mohammad~E Khan.
\newblock Decoupled variational gaussian inference.
\newblock In {\em NIPS}, 2014.

\bibitem{kingma2014semi}
Diederik~P Kingma, Shakir Mohamed, Danilo~Jimenez Rezende, and Max Welling.
\newblock Semi-supervised learning with deep generative models.
\newblock In {\em NIPS}, 2014.

\bibitem{kingma2014auto}
Diederik~P Kingma and Max Welling.
\newblock Auto-encoding variational bayes.
\newblock In {\em ICLR}, 2014.

\bibitem{martens2010deep}
James Martens.
\newblock Deep learning via hessian-free optimization.
\newblock In {\em ICML}, 2010.

\bibitem{mnih2014neural}
Andriy Mnih and Karol Gregor.
\newblock Neural variational inference and learning in belief networks.
\newblock In {\em ICML}, 2014.

\bibitem{mnih2015human}
Volodymyr Mnih, Koray Kavukcuoglu, David Silver, Andrei~A Rusu, Joel Veness,
  Marc~G Bellemare, Alex Graves, Martin Riedmiller, Andreas~K Fidjeland, Georg
  Ostrovski, et~al.
\newblock Human-level control through deep reinforcement learning.
\newblock {\em Nature}, 518(7540):529--533, 2015.

\bibitem{ngiam2011optimization}
Jiquan Ngiam, Adam Coates, Ahbik Lahiri, Bobby Prochnow, Quoc~V Le, and
  Andrew~Y Ng.
\newblock On optimization methods for deep learning.
\newblock In {\em ICML}, 2011.

\bibitem{pascanu2013revisiting}
Razvan Pascanu and Yoshua Bengio.
\newblock Revisiting natural gradient for deep networks.
\newblock {\em arXiv preprint arXiv:1301.3584}, 2013.

\bibitem{pearlmutter1994fast}
Barak~A Pearlmutter.
\newblock Fast exact multiplication by the hessian.
\newblock {\em Neural computation}, 6(1):147--160, 1994.

\bibitem{price1958useful}
Robert Price.
\newblock A useful theorem for nonlinear devices having gaussian inputs.
\newblock {\em Information Theory, IRE Transactions on}, 4(2):69--72, 1958.

\bibitem{rezende2014stochastic}
Danilo~Jimenez Rezende, Shakir Mohamed, and Daan Wierstra.
\newblock Stochastic backpropagation and approximate inference in deep
  generative models.
\newblock In {\em ICML}, 2014.

\bibitem{salimans2015markov}
Tim Salimans.
\newblock Markov chain monte carlo and variational inference: Bridging the gap.
\newblock In {\em ICML}, 2015.

\bibitem{sutskever2014sequence}
Ilya Sutskever, Oriol Vinyals, and Quoc~VV Le.
\newblock Sequence to sequence learning with neural networks.
\newblock In {\em NIPS}, 2014.

\bibitem{titsias2014doubly}
Michalis Titsias and Miguel L{\'a}zaro-Gredilla.
\newblock Doubly stochastic variational bayes for non-conjugate inference.
\newblock In {\em ICML}, 2014.

\end{thebibliography}
\bibliographystyle{plain}
}

\section*{Appendix}

\section {Proofs of the Extending Gaussian Gradient Equations}
\begin{lem} \label{lem:SecBonnet}
Let $f(\mathbf{z}): \mathcal{R}^{d_z} \rightarrow \mathcal{R} $ be an integrable and twice differentiable function. The second gradient of the expectation of $f(\mathbf{z})$ under a Gaussian distribution $ \mathcal{N} ( \mathbf{z} | \bm\mu, \mathbf{C}) $ with respect to the mean $\bm\mu$ can be expressed as the expectation of the Hessian of $ f(\mathbf{z}) $:
\begin{eqnarray} \label{MeanHessian}
\nabla^2_{ \mu_i, \mu_j } \mathbb{E}_{\mathcal{N} ( \mathbf{z} | \bm\mu, \mathbf{C})}[ f(\mathbf{z}) ] = \mathbb{E}_{\mathcal{N} ( \mathbf{z} | \bm\mu, \mathbf{C})} [ \nabla^2_{ z_i, z_j } f(\mathbf{z}) ] = 2\nabla_{ C_{ij}} \mathbb{E}_{\mathcal{N} ( \mathbf{z} | \bm\mu, \mathbf{C})}[ f(\mathbf{z}) ] .
\end{eqnarray}
\end{lem}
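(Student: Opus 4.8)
The plan is to prove the two equalities separately: the first is a second-order version of Bonnet's theorem, and the second is Price's theorem.

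For the first equality, I would start from the observation that the Gaussian density $p(\mathbf{z}) = \mathcal{N}(\mathbf{z}|\bm\mu,\mathbf{C})$ depends on $\mathbf{z}$ and $\bm\mu$ only through the difference $\mathbf{z}-\bm\mu$, so that $\partial_{\mu_i} p = -\partial_{z_i} p$. Differentiating $\mathbb{E}[f] = \int f(\mathbf{z}) p(\mathbf{z})\, d\mathbf{z}$ under the integral sign gives $\nabla_{\mu_i}\mathbb{E}[f] = -\int f\, \partial_{z_i} p\, d\mathbf{z}$, and one integration by parts (discarding the boundary term) moves the derivative onto $f$, yielding the first-order identity $\nabla_{\mu_i}\mathbb{E}[f] = \mathbb{E}[\partial_{z_i} f]$. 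I would then apply exactly the same argument a second time, now with $\partial_{z_i} f$ playing the role of $f$, to obtain $\nabla^2_{\mu_i,\mu_j}\mathbb{E}[f] = \nabla_{\mu_j}\mathbb{E}[\partial_{z_i}f] = \mathbb{E}[\partial^2_{z_i,z_j} f]$.

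For the second equality, the key lemma is the heat-type identity $\partial_{C_{ij}} p = \frac{1}{2}\,\partial^2_{z_i,z_j} p$. I would establish this most cleanly through the characteristic function $\hat p(\mathbf{t}) = \exp(i\mathbf{t}^\top\bm\mu - \frac12 \mathbf{t}^\top\mathbf{C}\mathbf{t})$: differentiating in $C_{ij}$ (treating the matrix entries as independent coordinates) brings down a factor $-\frac12 t_i t_j$, which is precisely $\frac12$ times the Fourier symbol of $\partial^2_{z_i,z_j}$. Inverting the transform returns the density-level identity. Substituting it into $\nabla_{C_{ij}}\mathbb{E}[f] = \int f\, \partial_{C_{ij}} p\, d\mathbf{z}$ and integrating by parts twice then gives $\nabla_{C_{ij}}\mathbb{E}[f] = \frac12\mathbb{E}[\partial^2_{z_i,z_j} f]$, which rearranges to the claimed $\mathbb{E}[\partial^2_{z_i,z_j} f] = 2\nabla_{C_{ij}}\mathbb{E}[f]$.

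The main obstacle I expect is analytic rather than algebraic: justifying the interchange of differentiation and integration and the vanishing of the boundary terms in each integration by parts. This is where the hypotheses that $f$ is integrable and twice differentiable must be supplemented by a mild growth control, so that $f$, $\partial_{z_i} f$, and the products of $f$ with $p$ and its first derivatives decay at infinity; the rapid Gaussian decay of $p$ and its derivatives does the real work, but it must be paired with a subexponential bound on $f$. A secondary subtlety worth flagging is the differentiation convention for the symmetric matrix $\mathbf{C}$: the factor of $2$ in the statement hinges on whether $C_{ij}$ and $C_{ji}$ are treated as one parameter or two, so I would fix this convention explicitly before invoking the characteristic-function computation in order for the constant to come out consistently.
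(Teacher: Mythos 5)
Your proposal is correct and follows essentially the same route as the paper: the translation identity $\partial_{\mu_i}p=-\partial_{z_i}p$ plus integration by parts for the $\bm\mu$-Hessian, and the heat-type identity $\partial_{C_{ij}}p=\tfrac12\partial^2_{z_i,z_j}p$ for the covariance derivative (the paper cites Bonnet's and Price's theorems for the first-order versions and states the heat identity without proof, whereas you derive both, and you additionally flag the regularity and symmetric-parameterization caveats that the paper leaves implicit). No substantive difference in method.
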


\begin{proof}
From Bonnet's theorem \cite{bonnet1964transformations}, we have 
\begin{eqnarray} \label{Bonnet}
\nabla_{ \mu_i} \mathbb{E}_{\mathcal{N} ( \mathbf{z} | \bm\mu, \mathbf{C})}[ f(\mathbf{z}) ] = \mathbb{E}_{\mathcal{N} ( \mathbf{z} | \bm\mu, \mathbf{C})} [ \nabla_{ z_i } f(\mathbf{z}) ].
\end{eqnarray}
Moreover, we can get the second order derivative,
\begin{eqnarray*}
\nabla_{ \mu_i, \mu_j }^2 \mathbb{E}_{\mathcal{N} ( \mathbf{z} | \bm\mu, \mathbf{C})}[ f(\mathbf{z}) ]  &=& \nabla_{ \mu_i} \left(\mathbb{E}_{\mathcal{N} ( \mathbf{z} | \bm\mu, \mathbf{C})} [ \nabla_{ z_j } f(\mathbf{z}) ] \right) \nonumber \\
& = & \int \nabla_{\mu_i} \mathcal{N} (\mathbf{z} | \bm\mu, \mathbf{C})  \nabla_{ z_j } f(\mathbf{z}) \mathrm{d} \mathbf{z} \nonumber \\
& = & - \int \nabla_{z_i} \mathcal{N} (\mathbf{z} | \bm\mu, \mathbf{C})  \nabla_{ z_j } f(\mathbf{z}) \mathrm{d} \mathbf{z} \nonumber \\
& = & - \left[ \int  \mathcal{N} (\mathbf{z} | \bm\mu, \mathbf{C})  \nabla_{ z_j } f(\mathbf{z}) \mathrm{d} z_{\neg i} \right]_{ z _i= -\infty}^{ z_i = + \infty} + \int \mathcal{N} (\mathbf{z} | \bm\mu, \mathbf{C}) \nabla_{ z_i, z_j } f(\mathbf{z}) \mathrm{d} \mathbf{z}\nonumber \\
& = & \mathbb{E}_{\mathcal{N} ( \mathbf{z} | \bm\mu, \mathbf{C})} [ \nabla_{ z_i, z_j }^2 f(\mathbf{z}) ]\\
& = & 2\nabla_{ C_{ij}} \mathbb{E}_{\mathcal{N} ( \mathbf{z} | \bm\mu, \mathbf{C})}[ f(\mathbf{z}) ],
\end{eqnarray*}
where the last euality we use the equation 
\begin{eqnarray} \label{Covloc}
\nabla_{ C_{ij}} \mathcal{N} ( \mathbf{z} | \bm\mu, \mathbf{C})=\frac{1}{2}\nabla_{z_i,z_j} ^2\mathcal{N} ( \mathbf{z} | \bm\mu, \mathbf{C}).
\end{eqnarray}
\end{proof}

\begin{lem} \label{lem:SecPrice}
Let $f(\mathbf{z}): \mathcal{R}^{d_z} \rightarrow \mathcal{R} $ be an integrable and fourth differentiable function. The second gradient of the expectation of $f(\mathbf{z})$ under a Gaussian distribution $ \mathcal{N} ( \mathbf{z} | \bm\mu, \mathbf{C}) $ with respect to the covariance $\mathbf{C}$ can be expressed as the expectation of the forth gradient of $ f(\mathbf{z}) $ 
\begin{eqnarray} \label{CovHessian}
\nabla_{ C_{i, j}, C_{k, l} }^2 \mathbb{E}_{\mathcal{N} ( \mathbf{z} | \bm\mu, \mathbf{C})}[ f(\mathbf{z}) ] = \frac{1}{4} \mathbb{E}_{\mathcal{N} ( \mathbf{z} | \bm\mu, \mathbf{C})} [ \nabla^4_{ z_i, z_j, z_k, z_l } f(\mathbf{z}) ].
\end{eqnarray}
\end{lem}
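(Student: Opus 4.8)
The plan is to reduce the second covariance derivative to two successive applications of Price's theorem, which is already available as a byproduct of Lemma~\ref{lem:SecBonnet}. Indeed, the chain of equalities in that lemma established the first-order Price identity
\[
\nabla_{C_{kl}} \mathbb{E}_{\mathcal{N}(\mathbf{z}|\bm\mu,\mathbf{C})}[f(\mathbf{z})] = \tfrac{1}{2}\,\mathbb{E}_{\mathcal{N}(\mathbf{z}|\bm\mu,\mathbf{C})}[\nabla^2_{z_k,z_l} f(\mathbf{z})].
\]
First I would apply this once to turn a single $C_{kl}$-derivative into the expectation of the spatial Hessian entry $\nabla^2_{z_k,z_l} f$.

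Next I would differentiate the resulting expression with respect to $C_{ij}$. The key observation is that $h(\mathbf{z}) := \nabla^2_{z_k,z_l} f(\mathbf{z})$ is itself an integrable, twice-differentiable function; this is precisely where the hypothesis that $f$ is fourth differentiable is used, since it guarantees that $h$ satisfies the assumptions of Lemma~\ref{lem:SecBonnet}. Applying Price's theorem a second time, now to $h$, gives $\nabla_{C_{ij}}\mathbb{E}[h] = \tfrac{1}{2}\mathbb{E}[\nabla^2_{z_i,z_j} h] = \tfrac{1}{2}\mathbb{E}[\nabla^4_{z_i,z_j,z_k,z_l} f]$. Composing the two steps produces the factor $\tfrac{1}{2}\cdot\tfrac{1}{2} = \tfrac{1}{4}$ together with the claimed fourth-order derivative.

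Alternatively, and more in keeping with the integration-by-parts style of the previous proof, I would pull both covariance derivatives onto the density and invoke the identity (\ref{Covloc}) twice. Since the operator $\nabla_{C_{ij}}$ acts on the parameters while $\nabla^2_{z_k,z_l}$ acts on the argument, the two commute on the smooth density, yielding
\[
\nabla^2_{C_{ij},C_{kl}}\mathcal{N}(\mathbf{z}|\bm\mu,\mathbf{C}) = \tfrac{1}{4}\,\nabla^4_{z_i,z_j,z_k,z_l}\mathcal{N}(\mathbf{z}|\bm\mu,\mathbf{C}),
\]
and integrating by parts four times transfers the four spatial derivatives from $\mathcal{N}$ onto $f$.

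The main obstacle is not the algebra but the analytic justification: interchanging $\nabla_C$ with the integral (dominated convergence, requiring mild growth control on $f$ and its derivatives) and verifying that the boundary terms from each integration by parts vanish. These are controlled by the rapid decay of the Gaussian density and its derivatives at infinity, exactly as in the single-derivative argument of Lemma~\ref{lem:SecBonnet}; assuming $f$ and its derivatives up to fourth order grow at most polynomially, every boundary contribution vanishes and the interchange is legitimate. A minor bookkeeping point is the symmetry of $\mathbf{C}$: the derivatives $\nabla_{C_{ij}}$ are taken treating the entries formally, consistent with the convention fixed in Lemma~\ref{lem:SecBonnet}.
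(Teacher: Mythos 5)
Your proposal is correct and follows essentially the same route as the paper: the paper likewise applies Price's theorem once and then obtains the second $C$-derivative by moving it onto the density via the identity $\nabla_{C_{kl}}\mathcal{N}=\tfrac{1}{2}\nabla^2_{z_k,z_l}\mathcal{N}$ and integrating by parts twice, which is exactly your ``apply Price a second time to $h=\nabla^2 f$'' step spelled out. Your added remarks on boundary terms and interchanging differentiation with the integral are sound and slightly more careful than the paper's own writeup.
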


\begin{proof}
From Price's theorem \cite{price1958useful}, we have
\begin{eqnarray} \label{Price}
\nabla_{ C_{i,j}} \mathbb{E}_{\mathcal{N} ( \mathbf{z} | \bm\mu, \mathbf{C})}[ f(\mathbf{z}) ] = \frac{1}{2} \mathbb{E}_{\mathcal{N} ( \mathbf{z} | \bm\mu, \mathbf{C})} [ \nabla^2_{ z_i, z_j } f(\mathbf{z}) ].
\end{eqnarray}

\begin{eqnarray*}
\nabla^2_{ C_{i, j}, C_{k, l} } \mathbb{E}_{\mathcal{N} ( \mathbf{z} | \bm\mu, \mathbf{C})}[ f(\mathbf{z}) ]  & = & \frac{1}{2} \nabla_{ C_{k, l}} \left (\mathbb{E}_{\mathcal{N} ( \mathbf{z} | \bm\mu, \mathbf{C})} [ \nabla^2_{ z_i, z_j } f(\mathbf{z}) ] \right) \nonumber \\
& = & \frac{1}{2} \int  \nabla_{ C_{k, l}} \mathcal{N} (\mathbf{z} | \bm\mu, \mathbf{C})  \nabla^2_{ z_i, z_j } f(\mathbf{z}) \mathrm{d} \mathbf{z} \nonumber \\
& = & \frac{1}{4} \int  \nabla_{z_k, z_l}^2 \mathcal{N} (\mathbf{z} | \bm\mu, \mathbf{C})  \nabla^2_{ z_i, z_j } f(\mathbf{z}) \mathrm{d} \mathbf{z} \nonumber \\
& = & \frac{1}{4} \int  \mathcal{N} (\mathbf{z} | \bm\mu, \mathbf{C})  \nabla^4_{ z_i, z_j, z_k, z_l} f(\mathbf{z}) \mathrm{d} \mathbf{z} \nonumber \\
& = & \frac{1}{4} \mathbb{E}_{\mathcal{N} ( \mathbf{z} | \bm\mu, \mathbf{C})} [ \nabla^4_{ z_i, z_j, z_k, z_l } f(\mathbf{z}) ].
\end{eqnarray*}

In the third equality we use the Eq.(\ref{Covloc}) again. For the  fourth equality we use the product rule for integrals twice.
\end{proof}

From Eq.(\ref{Bonnet}) and Eq.(\ref{Price}) we can straightforward write the second order gradient of interaction term as well:
\begin{eqnarray}
\nabla_{\mu_i, C_{k, l}}^2 \mathbb{E}_{\mathcal{N}(\mu, \mathbf{C})} [ f(z) ] = \frac{1}{2} \mathbb{E}_{\mathcal{N}(\mu, \mathbf{C})} \left[ \nabla_{ z_i, z_k, z_l}^3 f(z) \right].
\end{eqnarray}

\section{Proof of Theorem 1}
By using the linear transformation $ \mathbf{z} = \bm\mu + \mathbf{R} \bm\epsilon $, where $\bm\epsilon \sim N(0, \mathbf{I}_{d_z}) $, we can generate samples form any Gaussian distribution $ \mathcal{N}(\bm\mu, \mathbf{C}) $, $\mathbf{C} = \mathbf{RR}^\top$, where $\bm\mu(\bm\theta), \mathbf{R}(\bm\theta)$ are both dependent on parameter $\bm\theta=(\theta_l)_{l=1}^d$.

Then the gradients of the expectation with respect to  $\bm\mu$ and (or) $\mathbf{R}$ is 
\begin{eqnarray*}
\nabla_\mathbf{R} \mathbb{E}_{ \mathcal{N}(\bm\mu, \mathbf{C})} [ f(\mathbf{z}) ] & = & \nabla_\mathbf{R} \mathbb{E}_{ \mathcal{N}(0, \mathbf{I})} [ f( \bm\mu+ \mathbf{R} \bm\epsilon) ]  =  \mathbb{E}_{ \mathcal{N}(0, \mathbf{I})} [\bm\epsilon \mathbf{g}^\top] \label{transform1}\\
\nabla_{R_{i,j},R_{k,l}}^2 \mathbb{E}_{ \mathcal{N}(\bm\mu, \mathbf{C})} [ f(\mathbf{z}) ] & = & \nabla_{R_{i,j}}\mathbb{E}_{ \mathcal{N}(0, \mathbf{I})} [\epsilon_l g_k] = \mathbb{E}_{ \mathcal{N}(0, \mathbf{I})} [\epsilon_j\epsilon_l H_{ik}] \label{transform2}\\
\nabla_{\mu_i,R_{k,l}}^2 \mathbb{E}_{ \mathcal{N}(\bm\mu, \mathbf{C})} [ f(\mathbf{z}) ] & = & \nabla_{\mu_i}\mathbb{E}_{ \mathcal{N}(0, \mathbf{I})} [\epsilon_l g_k] = \mathbb{E}_{ \mathcal{N}(0, \mathbf{I})} [\epsilon_l H_{ik}] \label{transform3}\\
\nabla_{\bm\mu}^2 \mathbb{E}_{ \mathcal{N}(\bm\mu, \mathbf{C})} [ f(\mathbf{z}) ] & = & \mathbb{E}_{ \mathcal{N}(0, \mathbf{I})} [\mathbf{H}] \label{transform4}
\end{eqnarray*}

where $\mathbf{g}=\{g_j\}_{j=1}^{d_z}$ is the gradient of $f$ evaluated at $\bm\mu+\mathbf{R}\bm\epsilon$, $\mathbf{H}=\{H_{ij}\}_{d_z\times d_z}$ is the Hessian of $f$ evaluated at $\bm\mu+\mathbf{R}\bm\epsilon$.

Furthermore, we write the second order derivatives into matrix form:
\begin{eqnarray*}
\nabla_{\bm\mu,\mathbf{R}}^2 \mathbb{E}_{ \mathcal{N}(\bm\mu, \mathbf{C})} [ f(\mathbf{z}) ] & = & \mathbb{E}_{ \mathcal{N}(0, \mathbf{I})} [\bm\epsilon^\top\otimes\mathbf{H}],\\
\nabla_\mathbf{R}^2 \mathbb{E}_{ \mathcal{N}(\bm\mu, \mathbf{C})} [ f(\mathbf{z}) ] & = & \mathbb{E}_{ \mathcal{N}(0, \mathbf{I})} [(\bm\epsilon\bm\epsilon^T)\otimes\mathbf{H}].
\end{eqnarray*}

For a particular model, such as deep generative model, $\bm\mu$ and $\mathbf{C}$ are depend on the model parameters, we denote them as $\bm\theta=(\theta_l)_{l=1}^d$, i.e. $\bm\mu = \bm\mu(\bm\theta), \mathbf{C} = \mathbf{C}(\bm\theta)$. Combining Eq.\ref{Bonnet} and Eq.\ref{Price} and using the chain rule we have
\begin{eqnarray*}
\nabla_{\theta_l} \mathbb{E}_{\mathcal{N}(\bm\mu, \mathbf{C})} [ f(\mathbf{z}) ] & = & \mathbb{E}_{\mathcal{N}(\bm\mu, \mathbf{C})} \left[ \mathbf{g}^\top \frac{ \partial \bm\mu}{ \partial\theta_l} + \frac{1}{2} \Tr\left( \mathbf{H} \frac{ \partial \mathbf{C}}{ \partial\theta_l } \right) \right],\\
\end{eqnarray*}
where $\mathbf{g}$ and $\mathbf{H}$ are the first and second order gradient of $f(\mathbf{z})$ for abusing notation. This formulation involves matrix-matrix product, resulting in an algorithmic complexity $\mathcal{O}(d_z^2)$ for any single element of $\bm\theta$ w.r.t $f(\mathbf{z})$, and $\mathcal{O}(dd_z^2)$, $\mathcal{O}(d^2d_z^2)$ for overall gradient and Hessian respectively.

Considering $\mathbf{C}=\mathbf{R}\mathbf{R}^\top$, $ \mathbf{z} = \bm\mu + \mathbf{R} \bm\epsilon $,
\begin{eqnarray*}
\nabla_{\theta_l} \mathbb{E}_{\mathcal{N}(\bm\mu, \mathbf{C})} [ f(\mathbf{z}) ] & = &  \mathbb{E}_{\mathcal{N}(0, \mathbf{I})} \left[ \mathbf{g}^\top \frac{ \partial \bm\mu}{ \partial \theta_l} + \Tr \left( \bm\epsilon \mathbf{g}^\top\frac{ \partial \mathbf{R}}{ \partial\theta_l } \right) \right] \nonumber \\
& = & \mathbb{E}_{\mathcal{N}(0, \mathbf{I})} \left[ \mathbf{g}^\top \frac{ \partial \bm\mu}{ \partial\theta_l } + \mathbf{g}^\top\frac{ \partial\mathbf{R}}{ \partial\theta_l } \bm\epsilon \right] .\nonumber \\
\end{eqnarray*}

For the second order, we have the following separated formulation:
\begin{eqnarray*}
\nabla_{\theta_{l_1}\theta_{l_2}}^2 \mathbb{E}_{ \mathcal{N}(\bm\mu, \mathbf{C})} [ f(\mathbf{z}) ] & = & 
\nabla_{\theta_{l_1}} \mathbb{E}_{\mathcal{N}(0, \mathbf{I})} \left[ \sum_i g_i\frac{\partial\mu_i}{\partial\theta_{l_2}} + \sum_{i,j} \epsilon_j g_i\frac{ \partial R_{ij}}{ \partial\theta_{l_2} } \right] \nonumber \\
& = & \mathbb{E}_{\mathcal{N}(0, \mathbf{I})} \left[ \sum_{i,j} H_{ji}\left( \frac{\partial\mu_j}{\partial\theta_{l_1}} + \sum_k \epsilon_k \frac{ \partial R_{jk}}{ \partial\theta_{l_1} } \right)\frac{\partial\mu_i}{\partial\theta_{l_2}} + \sum_i g_i \frac{\partial^2\mu_i}{\partial\theta_{l_1}\partial\theta_{l_2}} \right. \nonumber \\
& + & \sum_{i,j}\epsilon_j\left(\sum_k H_{ik}\left( \frac{\partial\mu_k}{\partial\theta_{l_1}} + \sum_l \epsilon_l \frac{ \partial R_{kl}}{ \partial\theta_{l_1} } \right)\right)\frac{ \partial R_{ij}}{ \partial\theta_{l_2} } +  \left. \sum_{i,j} \epsilon_j g_i \frac{ \partial^2 R_{ij}}{ \partial\theta_{l_1}\partial_{l_2} } \right] \nonumber\\
& = & \mathbb{E}_{\mathcal{N}(0, \mathbf{I})} \left[  \frac{ \partial \bm\mu}{ \partial\theta_{l_1} }^\top \mathbf{H} \frac{ \partial \bm\mu}{ \partial\theta_{l_2} } + \left( \frac{ \partial\mathbf{R} }{ \partial\theta_{l_1}}\bm\epsilon \right)^\top \mathbf{H}\frac{\partial \bm\mu}{\partial\theta_{l_2}} + \mathbf{g}^\top\frac{ \partial^2 \bm\mu}{ \partial\theta_{l_1}\partial_{l_2} } \right. \nonumber \\
& + & \left.  \left( \frac{ \partial\mathbf{R} }{ \partial\theta_{l_2}}\bm\epsilon \right)^\top \mathbf{H}\frac{\partial \bm\mu}{\partial\theta_{l_1}} + \left( \frac{\partial\mathbf{R} }{\partial\theta_{l_1}}\bm\epsilon \right)^\top  \mathbf{H}\frac{\partial\mathbf{R}}{\partial\theta_{l_2}} \bm\epsilon + \mathbf{g}^\top \frac{ \partial^2 \mathbf{R}}{ \partial\theta_{l_1}\partial\theta_{l_2} } \bm\epsilon \right] \nonumber\\
& = & \mathbb{E}_{\mathcal{N}(0, \mathbf{I})} \left[ \frac{ \partial (\bm\mu + \mathbf{R}\bm\epsilon)}{ \partial\theta_{l_1} }^\top \mathbf{H}  \frac{\partial (\bm\mu + \mathbf{R}\bm\epsilon)}{\partial\theta_{l_2}} + \mathbf{g}^\top \frac{ \partial^2 (\bm\mu + \mathbf{R}\bm\epsilon)}{ \partial\theta_{l_1}\partial_{l_2} } \right].\nonumber 
\end{eqnarray*}
It is noticed that for second order gradient computation, it only involves matrix-vector or vector-vector multiplication, thus leading to an algorithmic complexity $\mathcal{O}(d_z^2)$ for each pair of $\bm\theta$.

One practical parametrization is $\mathbf{C}=\text{diag}\{\sigma_1^2,...,\sigma_{d_z}^2\}$ or $\mathbf{R}=\text{diag}\{\sigma_1,...,\sigma_{d_z}\}$, which will reduce the actual second order gradient computation complexity, albeit the same order of $\mathcal{O}(d_z^2)$. Then we have
\begin{eqnarray}
\nabla_{\theta_l} \mathbb{E}_{\mathcal{N}(\bm\mu, \mathbf{C})} [ f(\mathbf{z}) ] & = &  \mathbb{E}_{\mathcal{N}(0, \mathbf{I})} \left[ \mathbf{g}^\top \frac{ \partial \bm\mu}{ \partial\theta_l} + \sum_i \epsilon_i g_i\frac{ \partial \sigma_i}{ \partial\theta_l } \right] \nonumber \\
& = & \mathbb{E}_{\mathcal{N}(0, \mathbf{I})} \left[ \mathbf{g}^\top \frac{ \partial \bm\mu}{ \partial\theta_l } + (\bm\epsilon \odot \mathbf{g})^\top \frac{ \partial \bm\sigma}{ \partial\theta_l } \right],\\
\nabla_{\theta_{l_1}\theta_{l_2}}^2 \mathbb{E}_{\mathcal{N}(\bm\mu, \mathbf{C})} [ f(\mathbf{z}) ] & = &   \mathbb{E}_{\mathcal{N}(0, \mathbf{I})} \left[ \frac{ \partial \bm\mu}{ \partial\theta_{l_1} }^\top \mathbf{H} \frac{ \partial \bm\mu}{ \partial\theta_{l_2} } + \left( \bm\epsilon \odot \frac{ \partial \bm\sigma}{ \partial\theta_{l_1} } \right)^\top \mathbf{H} \frac{ \partial \bm\mu}{ \partial\theta_{l_2} } + \mathbf{g}^\top\frac{ \partial^2 \bm\mu}{ \partial\theta_{l_1}\partial\theta_{l_2}} \right. \nonumber \\
& + & \left( \bm\epsilon \odot \frac{ \partial \bm\sigma}{ \partial\theta_{l_2} } \right)^\top \mathbf{H} \frac{ \partial \bm\mu}{ \partial\theta_{l_1} } + \left(\bm\epsilon\odot\frac{ \partial \bm\sigma}{ \partial\theta_{l_1} }\right)^\top \mathbf{H} \left(\bm\epsilon \odot \frac{ \partial \bm\sigma}{ \partial\theta_{l_2} } \right) \nonumber \\
& + & \left. (\bm\epsilon \odot \mathbf{g})^\top \frac{ \partial^2 \bm\sigma}{ \partial\theta_{l_1}\partial\theta_{l_2} } \right] \nonumber \\
& = & \mathbb{E}_{\mathcal{N}(0, \mathbf{I})} \left[ \left( \frac{ \partial \bm\mu}{ \partial\theta_{l_1} } + \bm\epsilon \odot \frac{ \partial \bm\sigma}{ \partial\theta_{l_1} } \right)^\top \mathbf{H} \left( \frac{ \partial \bm\mu}{ \partial\theta_{l_2} } + \bm\epsilon \odot \frac{ \partial \bm\sigma}{ \partial\theta_{l_2} } \right) \right. \nonumber \\ 
& + & \left. \mathbf{g}^\top \left( \frac{ \partial^2 \bm\mu}{ \partial\theta_{l_1}\partial\theta_{l_2}} + \frac{ \partial^2 (\bm\epsilon\odot\bm\sigma)}{ \partial\theta_{l_1}\partial\theta_{l_2} }\right) \right],
\end{eqnarray}
where $\odot$ is Hadamard (or element-wise) product, and $\bm\sigma=(\sigma_1,...,\sigma_{d_z})^\top$.

\textbf{Derivation for Hessian-Free SGVI without $\bm\theta$ Plugging}
This means $(\bm\mu, \mathbf{R})$ is the parameter for variational distribution. According the derivation in this section, the Hessian matrix with respect to $(\bm\mu, \mathbf{R})$ can represented as $\mathbf{H}_{\bm\mu,\mathbf{R}}= \mathbb{E}_{\mathcal{N}(0,\mathbf{I})} \left[ \left(\begin{bmatrix}1 \\ \bm\epsilon\\ \end{bmatrix}[1,\bm\epsilon^\top]\right) \otimes \mathbf{H} \right]$. For any $d_z\times (d_z+1)$ matrix $\mathbf{V}$ with the same dimensionality of $[\bm\mu,\mathbf{R}]$, we also have the Hessian-vector multiplication equation.
\begin{align*}
\mathbf{H}_{\bm\mu,\mathbf{R}}vec(\mathbf{V}) = \mathbb{E}_{\mathcal{N}(0,\mathbf{I})} \left[ vec\left( \mathbf{H}\mathbf{V}\begin{bmatrix}1 \\ \bm\epsilon\\ \end{bmatrix}[1,\bm\epsilon^\top] \right) \right]
\end{align*}
where $vec(\cdot)$ denotes the vectorization of the matrix formed by stacking the columns into a single column vector. This allows an efficient computation both in speed and storage.

\section{Forward-Backward Algorithm for Special Variation Auto-encoder Model}
We illustrate the equivalent deep neural network model (Figure \ref{fig:VAE}) by setting $M=1$ in VAE, and derive the gradient computation by lawyer-wise backpropagation. Without generalization, we give discussion on the binary input and diagonal covariance matrix, while it is straightforward to write the continuous case. For binary input, the parameters are $\{(W_i,b_i)\}_{i=1}^5$. 

The feedforward process is as follows:
\begin{eqnarray*}
\mathbf{h}_e & = & \tanh(W_1\mathbf{x}+b_1) \\
\bm\mu_e & = & W_2\mathbf{h}_e+b_2 \\
\bm\sigma_e & = &  \exp\{(W_3\mathbf{h}_e+b_3)/2\} \\
\bm\epsilon & \sim & \mathcal{N}(0,\mathbf{I}_{d_z}) \\
\mathbf{z} & = & \bm\mu_e + \bm\sigma_e\odot\bm\epsilon \\
\mathbf{h}_d & = & \tanh(W_4\mathbf{z}+b_4) \\
\mathbf{y} & = & \text{sigmoid}(W_5\mathbf{h}_d+b_5).
\end{eqnarray*}

Considering the cross-entropy loss function, the backward process for gradient backpropagation computation is:
\begin{eqnarray*}
\bm\delta_5 & = & \mathbf{x}\odot(1-\mathbf{y}) - (1-\mathbf{x})\odot\mathbf{y} \\
\nabla_{W_5} & = & \bm\delta_5\mathbf{h}_d^\top, \quad \nabla_{b_5} = \bm\delta_5 \\
\bm\delta_4 & = & (W_5^\top\bm\delta_5)\odot(\mathbf{1} - \mathbf{h}_d\odot\mathbf{h}_d) \\
\nabla_{W_4} & = & \bm\delta_4\mathbf{z}^\top, \quad \nabla_{b_4} = \bm\delta_4 \\
\bm\delta_3 & = & 0.5*[(W_4^\top\bm\delta_4)\odot(\mathbf{z} - \bm\mu_e) + \mathbf{1} - \bm\sigma_e^2] \\
\nabla_{W_3} & = & \bm\delta_3\mathbf{h}_e^\top, \quad \nabla_{b_3} = \bm\delta_3 \\
\bm\delta_2 & = & W_4^\top\bm\delta_4 - \bm\mu_e \\
\nabla_{W_2} & = & \bm\delta_2\mathbf{h}_e^\top, \quad \nabla_{b_2} = \bm\delta_2 \\
\bm\delta_1 & = & (W_2^\top\bm\delta_2 + W_3^\top\bm\delta_3)\odot(\mathbf{1} - \mathbf{h}_e\odot\mathbf{h}_e) \\
\nabla_{W_1} & = & \bm\delta_1\mathbf{x}^\top, \quad \nabla_{b_1} = \bm\delta_1.
\end{eqnarray*}

Notice that when we compute the differences $\bm\delta_2,\bm\delta_3$, we also include the prior term which acts as the role of regularization penalty. In addition, we can add the $\mathcal{L}_2$ penalty to the weight matrix as well. The only modification is to change the expression of $\nabla_{W_i}$ by adding $\lambda W_i$, where $\lambda$ is a tunable hyper-parameter.

\begin{figure}[t!]
  \centering
    \includegraphics[width=0.6\textwidth]{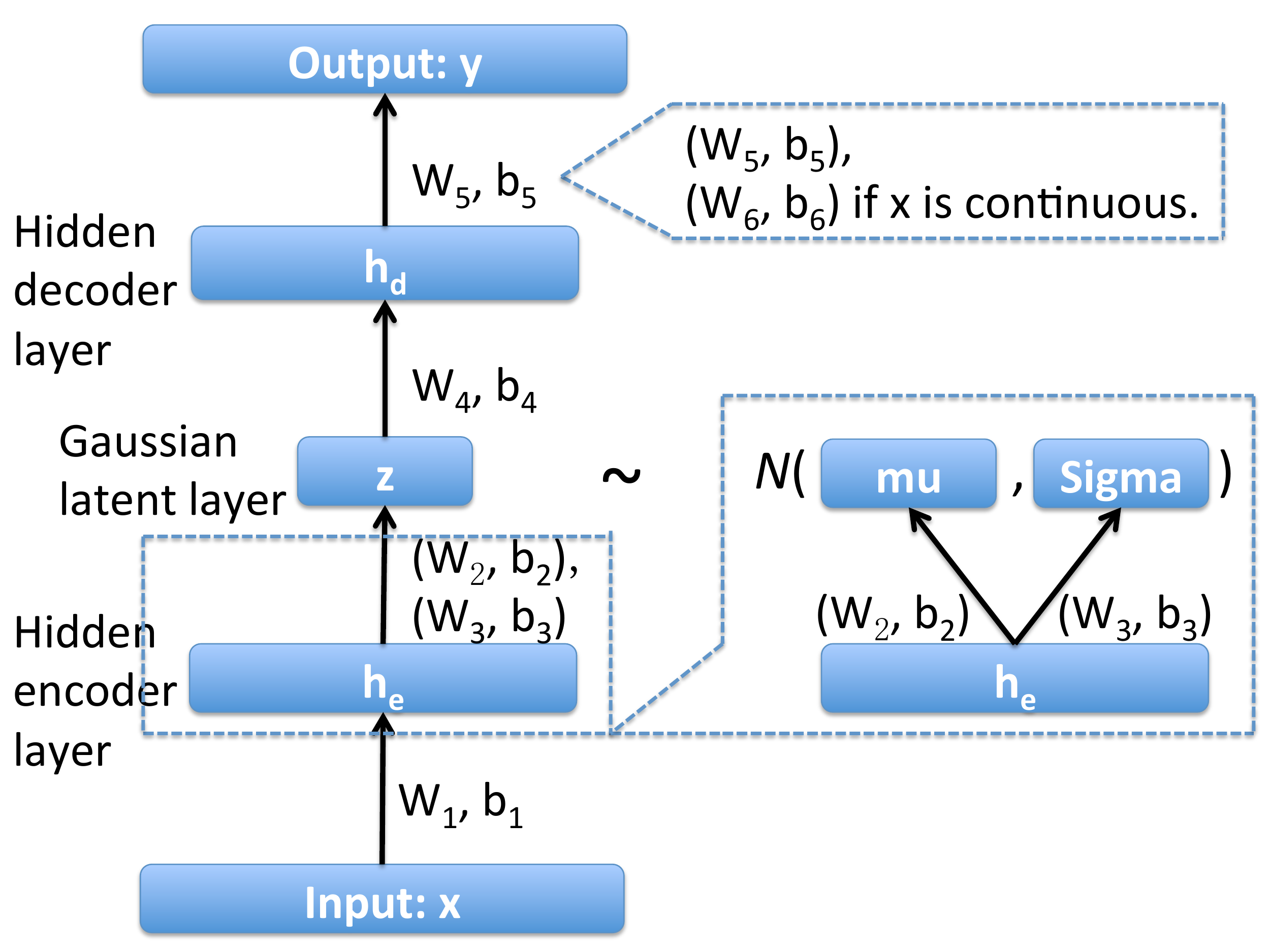}
     \caption{Auto-encoder Model by Deep Neural Nets.}
     \label{fig:VAE}
\end{figure}

\section{$\mathcal{R}$-Operator Derivation}

Define $\mathcal{R}_{\mathbf{v}}\{f(\bm\theta)\} = \left. \frac{\partial}{\partial \gamma} f( \bm\theta + \gamma \mathbf{v}) \right|_{\gamma = 0}$, then $\mathbf{H_{\bm\theta}v} = \mathcal{R}_{\mathbf{v}}\{\nabla_{\bm\theta} F( \bm\theta)\}$. First, mapping $\mathbf{v}$ to $\{\mathcal{R}\{ W_i \}, \mathcal{R}\{ b_i \}\}_{i=1}^5$. Then, we derive the $\mathbf{Hv}$ to $\{\mathcal{R}\{ \mathcal{D}W_i \}, \mathcal{R}\{ \mathcal{D}b_i \}\}_{i=1}^5$, where $\mathcal{D}$-operator means take derivative with respect to objective function.

Denote $\mathbf{s}_i = W_i\mathbf{u} + b_i$, where $\mathbf{u}$ can represent any vector used in neural networks.\\
Forward Pass:
\begin{align*}
\mathcal{R}\{\mathbf{s}_1\} &= \mathcal{R}\{ W_1 \}\mathbf{x} + \mathcal{R}\{ b_1 \} \quad (\text{ Since } \mathcal{R}\{ \mathbf{x} \} = 0) \\
\mathcal{R}\{ \mathbf{h}_e \} &= \mathcal{R}\{\mathbf{s}_1\} \tanh^\prime(\mathbf{s}_1) \\
\mathcal{R}\{ \bm\mu_e \} &= \mathcal{R}\{\mathbf{s}_2\} = \mathcal{R}\{ W_2 \}\mathbf{h}_e + W_2\mathcal{R}\{ \mathbf{h}_e \} + \mathcal{R}\{ b_2 \} \\
\mathcal{R}\{\mathbf{s}_3\} &= \mathcal{R}\{ W_3 \}\mathbf{h}_e + W_3\mathcal{R}\{ \mathbf{h}_e \} + \mathcal{R}\{ b_3 \} \\
\mathcal{R}\{ \bm\sigma_e \} &= \mathcal{R}\{\mathbf{s}_3\} \exp\{\mathbf{s}_3/2\} \quad (\text{ Since } (e^x)^\prime = e^x) \\
\mathcal{R}\{ \mathbf{z} \} &= \mathcal{R}\{ \bm\mu_e \} + \mathcal{R}\{ \bm\sigma_e \} \odot \bm\epsilon \\
\mathcal{R}\{\mathbf{s}_4\} &= \mathcal{R}\{ W_4 \}\mathbf{z} + W_4\mathcal{R}\{ \mathbf{z} \} + \mathcal{R}\{ b_4 \} \\
\mathcal{R}\{ \mathbf{h}_d \} &= \mathcal{R}\{\mathbf{s}_4\} \tanh^\prime(\mathbf{s}_4) \\
\mathcal{R}\{\mathbf{s}_5\} &= \mathcal{R}\{ W_5 \}\mathbf{h}_d + W_5\mathcal{R}\{ \mathbf{h}_d \} + \mathcal{R}\{ b_5 \} \\
\mathcal{R}\{ \mathbf{y} \} &= \mathcal{R}\{\mathbf{s}_5\} \text{sigmoid}^\prime(\mathbf{s}_5) 
\end{align*}

Backwards Pass:
\begin{align*}
\mathcal{R}\{ \mathcal{D}\mathbf{y} \} &= \mathcal{R}\left\{ \frac{\partial \mathcal{L}(\mathbf{x},\mathbf{y})}{\partial \mathbf{y}} \right\} = \frac{\partial^2 \mathcal{L}(\mathbf{x},\mathbf{y})}{\partial^2 \mathbf{y}} \mathcal{R}\{ \mathbf{y} \} \\
\mathcal{R}\{ \mathcal{D} \mathbf{s}_5 \} &= \mathcal{R}\{ \mathcal{D}\mathbf{y} \} \odot \text{sigmoid}^\prime(\mathbf{s}_5) + \mathcal{D}\mathbf{y} \odot \text{sigmoid}''(\mathbf{s}_5) \odot \mathcal{R}\{ \mathbf{s}_5 \} \\
\mathcal{R}\{ \mathcal{D} W_5 \} &= \mathcal{R}\{ \mathcal{D} \mathbf{s}_5 \} \mathbf{h}_d^\top + \mathcal{D} \mathbf{s}_5 \mathcal{R}\{\mathbf{h}_d\}^\top \\
\mathcal{R}\{ \mathcal{D} b_5 \} &= \mathcal{R}\{ \mathcal{D} \mathbf{s}_5 \} \\
\mathcal{R}\{ \mathcal{D} \mathbf{h}_d \} &= \mathcal{R}\{ W_5 \}^\top \mathcal{D} \mathbf{s}_5 + W_5^\top \mathcal{R}\{ \mathcal{D} \mathbf{s}_5 \}
\end{align*}
where the rest can follow the same recursive computation which is similar to gradient derivation.

\section{Variance Analysis (Proof of Theorem 2)}
In this part we analyze the variance of the stochastic estimator. 
\begin{lem} \label{lem:ineq}
For any convex function $\phi$, 
\begin{eqnarray}
\mathbb{E}[\phi(f(\bm\epsilon)-\mathbb{E}[f(\bm\epsilon)])] \leq \mathbb{E}\left[\phi\left(\frac{\pi}{2}\langle \nabla f(\bm\epsilon), \bm\eta \rangle\right)\right],
\end{eqnarray}
where $\bm\epsilon,\bm\eta\sim\mathcal{N}(0,\mathbf{I}_{d_z})$ and $\bm\epsilon,\bm\eta$ are independent. 
\end{lem}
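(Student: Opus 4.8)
The plan is to prove this by a symmetrization step followed by the Maurey--Pisier interpolation (``smart path'') argument that underlies Gaussian concentration. First I would introduce an independent copy $\bm\eta\sim\mathcal{N}(0,\mathbf{I}_{d_z})$ of $\bm\epsilon$ and write $\mathbb{E}[f(\bm\epsilon)]=\mathbb{E}_{\bm\eta}[f(\bm\eta)]$. Pulling the inner expectation inside $\phi$ and invoking Jensen's inequality for the convex $\phi$ then gives
\begin{equation*}
\mathbb{E}_{\bm\epsilon}\big[\phi(f(\bm\epsilon)-\mathbb{E}[f(\bm\epsilon)])\big]
=\mathbb{E}_{\bm\epsilon}\big[\phi(\mathbb{E}_{\bm\eta}[f(\bm\epsilon)-f(\bm\eta)])\big]
\leq \mathbb{E}_{\bm\epsilon,\bm\eta}\big[\phi(f(\bm\epsilon)-f(\bm\eta))\big],
\end{equation*}
which reduces the problem to controlling the symmetrized difference $f(\bm\epsilon)-f(\bm\eta)$.

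The key device is the rotation path $\bm\theta(t)=\bm\epsilon\sin t+\bm\eta\cos t$ for $t\in[0,\pi/2]$, so that $\bm\theta(\pi/2)=\bm\epsilon$ and $\bm\theta(0)=\bm\eta$, with derivative $\bm\theta'(t)=\bm\epsilon\cos t-\bm\eta\sin t$. By the fundamental theorem of calculus and the chain rule,
\begin{equation*}
f(\bm\epsilon)-f(\bm\eta)=\int_0^{\pi/2}\langle\nabla f(\bm\theta(t)),\bm\theta'(t)\rangle\,dt
=\frac{\pi}{2}\,\mathbb{E}_{t\sim\nu}\big[\langle\nabla f(\bm\theta(t)),\bm\theta'(t)\rangle\big],
\end{equation*}
where $\nu$ is the uniform probability measure on $[0,\pi/2]$ (so $d\nu=\tfrac{2}{\pi}\,dt$). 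Applying $\phi$, using Jensen's inequality once more against $\nu$, and then taking the outer expectation over $(\bm\epsilon,\bm\eta)$ together with Fubini yields
\begin{equation*}
\mathbb{E}_{\bm\epsilon,\bm\eta}\big[\phi(f(\bm\epsilon)-f(\bm\eta))\big]
\leq \mathbb{E}_{t\sim\nu}\,\mathbb{E}_{\bm\epsilon,\bm\eta}\Big[\phi\Big(\tfrac{\pi}{2}\langle\nabla f(\bm\theta(t)),\bm\theta'(t)\rangle\Big)\Big].
\end{equation*}

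The step I expect to carry the real content is the distributional identity that closes the argument: for every fixed $t$, the pair $(\bm\theta(t),\bm\theta'(t))$ is again a pair of \emph{independent} standard Gaussians. This holds because $(\bm\theta(t),\bm\theta'(t))$ is an orthogonal rotation of $(\bm\epsilon,\bm\eta)$ — one checks $\mathbb{E}[\theta_i(t)\theta_j'(t)]=0$ and that each coordinate has unit variance — so rotational invariance of the isotropic Gaussian gives $(\bm\theta(t),\bm\theta'(t))\stackrel{d}{=}(\bm\epsilon,\bm\eta)$. Consequently the inner expectation above equals $\mathbb{E}_{\bm\epsilon,\bm\eta}[\phi(\tfrac{\pi}{2}\langle\nabla f(\bm\epsilon),\bm\eta\rangle)]$ for every $t$, the $t$-average over $\nu$ returns this constant, and the claimed bound follows. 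The only points needing care are justifying the interchange of differentiation and expectation (using that $f$ is smooth with integrable gradient growth, as assumed) and confirming the constant: the endpoints of the path force the interval length to be exactly $\pi/2$, which is precisely where the factor $\pi/2$ in the statement originates.
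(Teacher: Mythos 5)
Your proposal is correct and follows essentially the same route as the paper's proof: symmetrize via an independent copy and Jensen, interpolate along the rotation path $\bm\gamma(\omega)=\bm\epsilon\sin\omega+\bm\eta\cos\omega$, apply Jensen against the normalized measure on $[0,\pi/2]$, and close with the observation that $(\bm\gamma(\omega),\bm\gamma'(\omega))$ is again a pair of independent standard Gaussians. Your remarks on justifying the interchange of expectation and integration are a welcome extra degree of care, but the substance is identical.
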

\begin{proof}
Using interpolation $\bm\gamma(\omega)=\bm\epsilon\sin(\omega)+\bm\eta\cos(\omega)$, then $\bm\gamma'(\omega)=\bm\epsilon\cos(\omega)-\bm\eta\sin(\omega)$, and $\bm\gamma(0)=\bm\eta,\bm\gamma(\pi/2)=\bm\epsilon$. Furthermore, we have the equation,
\begin{eqnarray*}
f(\bm\epsilon)-f(\bm\eta)=\int_0^{\frac{\pi}{2}} \frac{\mathrm{d}}{\mathrm{d}\omega}f(\bm\gamma(\omega))\mathrm{d}\omega = \int_0^{\frac{\pi}{2}} \langle \nabla f(\bm\gamma(\omega)),\bm\gamma'(\omega) \rangle \mathrm{d}\omega.
\end{eqnarray*}
Then
\begin{eqnarray*}
\mathbb{E}_{\bm\epsilon}[\phi(f(\bm\epsilon)-\mathbb{E}[f(\bm\epsilon)])] &=& \mathbb{E}_{\bm\epsilon}[\phi(f(\bm\epsilon)-\mathbb{E}_{\bm\eta}[f(\bm\eta)])]
 \leq  \mathbb{E}_{\bm\epsilon,\bm\eta}[\phi(f(\bm\epsilon)-f(\bm\eta))] \nonumber \\
& = & \mathbb{E}\left[ \phi\left( \frac{2}{\pi} \int_0^{\frac{\pi}{2}} \frac{\pi}{2} \langle \nabla f(\bm\gamma(\omega)),\bm\gamma'(\omega) \rangle \mathrm{d}\omega \right) \right] \nonumber \\
 & \leq & \frac{2}{\pi} \mathbb{E}\left[ \int_0^{\frac{\pi}{2}} \phi\left( \frac{\pi}{2} \langle \nabla f(\bm\gamma(\omega)),\bm\gamma'(\omega) \rangle \right)  \mathrm{d}\omega \right] \nonumber \\
& = & \frac{2}{\pi} \int_0^{\frac{\pi}{2}} \mathbb{E}\left[  \phi\left( \frac{\pi}{2} \langle \nabla f(\bm\gamma(\omega)),\bm\gamma'(\omega) \rangle \right) \right] \mathrm{d}\omega \nonumber \\
& = & \mathbb{E}\left[\phi\left(\frac{\pi}{2}\langle \nabla f(\bm\epsilon), \bm\eta \rangle\right)\right].
\end{eqnarray*}
The above two inequalities use the Jensen's Inequality. The last equation holds because both $\bm\gamma$ and $\bm\gamma'$ follow $\mathcal{N}(0,\mathbf{I}_d)$, and $\mathbb{E}[\bm\gamma\bm\gamma'^\top]=0$ implies they are independent.
\end{proof}

Before giving a dimensional free bound, we first let $\phi(x)=x^2$ and can obtain a relatively loosen bound of variance for our estimators. Assuming $f$ is a $L$-Lipschitz differentiable function and $\epsilon\sim\mathcal{N}(0,\mathbf{I}_{d_z})$, the following inequality holds:
\begin{eqnarray}
\mathbb{E}[(f(\bm\epsilon)-\mathbb{E}[f(\bm\epsilon)])^2] \leq \frac{\pi^2L^2d_z}{4}.
\end{eqnarray}\label{Loose}
To see the reason, we only need to reuse the double sample trick and the expectation of Chi-squared distribution, we have
\begin{eqnarray*} \label{Loose}
\mathbb{E}\left[\left(\frac{\pi}{2}\langle \nabla f(\bm\epsilon), \bm\eta \rangle\right)^2\right] \leq \frac{\pi^2L^2}{4}\mathbb{E}[\|\bm\eta\|^2] = \frac{\pi^2L^2d_z}{4}.
\end{eqnarray*}
Then by Lemma \ref{lem:ineq}, Eq.(\ref{Loose}) holds. To get a tighter bound as in Theorem 2, we give the following Lemma \ref{lem:Var} and Lemma \ref{lem:Gaussian} first.

\begin{lem} [\cite{buldygin1980sub}]\label{lem:Var} 
A random variable $X$ with mean $\mu=\mathbb{E}[X]$ is sub-Gaussian if there exists a positive number $\sigma$ such that for all $\lambda\in\mathcal{R}^+$
\begin{align*}
\mathbb{E}\left[e^{\lambda(X-\mu)}\right]\leq e^{\sigma^2\lambda^2/2},
\end{align*}
then we have
\begin{align*}
\mathbb{E}\left[(X- \mu)^2\right]\leq \sigma^2 .
\end{align*}
\end{lem}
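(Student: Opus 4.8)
The plan is to reduce to the centered case and then compare the moment generating function with its sub-Gaussian envelope to second order at the origin. Without loss of generality I would replace $X$ by $X-\mu$, so that the hypothesis becomes $M(\lambda):=\mathbb{E}[e^{\lambda(X-\mu)}]\le e^{\sigma^2\lambda^2/2}$ for every $\lambda>0$, with $M(0)=1$, and the target reads $M''(0)=\mathbb{E}[(X-\mu)^2]\le\sigma^2$.

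First I would argue that $M$ is twice differentiable at $0$, with $M'(0)=\mathbb{E}[X-\mu]=0$ and $M''(0)=\mathbb{E}[(X-\mu)^2]$. This is where the only real work lies: I would invoke the standard fact that finiteness of the MGF in a neighborhood of the origin — guaranteed here by the envelope $e^{\sigma^2\lambda^2/2}<\infty$ — permits differentiation under the expectation and identifies the derivatives of $M$ at $0$ with the moments of $X-\mu$. A dominated convergence argument with an integrable dominating function of the form $C_\delta e^{\delta|X-\mu|}$ on a small interval $|\lambda|\le\delta$ makes this rigorous.

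Next I would set $g(\lambda)=e^{\sigma^2\lambda^2/2}-M(\lambda)$, which by hypothesis satisfies $g(\lambda)\ge 0$ for $\lambda\ge 0$, together with $g(0)=0$ and $g'(0)=0$. Since $\tfrac{d^2}{d\lambda^2}e^{\sigma^2\lambda^2/2}\big|_{\lambda=0}=\sigma^2$, we obtain $g''(0)=\sigma^2-\mathbb{E}[(X-\mu)^2]$. Because $\lambda=0$ is a right-hand minimizer of the smooth function $g$ with $g(0)=g'(0)=0$, the second-order Taylor expansion $g(\lambda)=\tfrac12 g''(0)\lambda^2+o(\lambda^2)$ can remain nonnegative for small $\lambda>0$ only if $g''(0)\ge 0$, which is exactly $\mathbb{E}[(X-\mu)^2]\le\sigma^2$. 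Equivalently, one may divide the hypothesis by $\lambda^2$ and let $\lambda\to 0^+$, using $(M(\lambda)-1)/\lambda^2\to\tfrac12\mathbb{E}[(X-\mu)^2]$ and $(e^{\sigma^2\lambda^2/2}-1)/\lambda^2\to\tfrac12\sigma^2$.

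I expect the differentiation-under-the-expectation step to be the main obstacle to state cleanly; everything after it is a short second-order comparison. All moment identities and the expansion of the Gaussian envelope are routine once that step is secured.
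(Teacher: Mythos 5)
Your proposal is correct and takes essentially the same route as the paper: the paper expands $\mathbb{E}[e^{\lambda(X-\mu)}]$ and $e^{\sigma^2\lambda^2/2}$ in powers of $\lambda$, notes the zeroth- and first-order terms match (since $\mathbb{E}[X-\mu]=0$), and compares the $\lambda^2$ coefficients as $\lambda\to 0$, which is exactly your ``divide by $\lambda^2$ and let $\lambda\to 0^+$'' step. Your phrasing via $g(\lambda)=e^{\sigma^2\lambda^2/2}-M(\lambda)$ and $g''(0)\ge 0$ is only a cosmetic repackaging of that comparison, and your attention to the one-sided limit (the hypothesis holds only for $\lambda>0$) is if anything slightly more careful than the paper's.
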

\begin{proof}
By Taylor's expansion,
\begin{align*}
\mathbb{E}\left[e^{\lambda(X-\mu)}\right]=\mathbb{E}\left[\sum_{i=1}^\infty\frac{\lambda^i}{i!}(X-\mu)^i\right] \leq e^{\sigma^2\lambda^2/2} = \sum_{i=0}^\infty \frac{\sigma^{2i}\lambda^{2i}}{2^ii!} .
\end{align*}
Thus $\frac{\lambda^2}{2}\mathbb{E}[(X-\mu)^2]\leq \frac{\sigma^2\lambda^2}{2} + o(\lambda^2)$. Let $\lambda\rightarrow0$, we have $\mathrm{Var}(X)\leq \sigma^2$.
\end{proof}

\begin{lem}\label{lem:Gaussian}
If $f(x)$ is a $L$-lipschitz differentiable function and $\bm\epsilon\in\mathcal{N}(0,\mathbf{I}_{d_z})$ then the random variable $f(\bm\epsilon)-\mathbb{E}[f(\bm\epsilon)]$ is sub-Gaussian with parameter $L$, i.e. for all $\lambda\in\mathcal{R}^+$
\begin{align*}
\mathbb{E}\left[e^{\lambda(f(\bm\epsilon)-\mathbb{E}[f(\bm\epsilon)])}\right]\leq e^{L^2\lambda^2\pi^2 /8} .
\end{align*}
\end{lem}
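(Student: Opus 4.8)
The plan is to invoke the comparison inequality of Lemma \ref{lem:ineq} with the convex function $\phi(x)=e^{\lambda x}$, and then exploit the fact that, conditioned on $\bm\epsilon$, the inner product $\langle\nabla f(\bm\epsilon),\bm\eta\rangle$ is a centered one-dimensional Gaussian whose moment generating function is available in closed form.

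First I would set $\phi(x)=e^{\lambda x}$, which is convex for every $\lambda>0$. Lemma \ref{lem:ineq} then applies verbatim and yields
\begin{equation*}
\mathbb{E}\left[e^{\lambda(f(\bm\epsilon)-\mathbb{E}[f(\bm\epsilon)])}\right]\leq \mathbb{E}\left[e^{\frac{\lambda\pi}{2}\langle\nabla f(\bm\epsilon),\bm\eta\rangle}\right],
\end{equation*}
where $\bm\epsilon$ and $\bm\eta$ are independent standard Gaussians. Next I would condition on $\bm\epsilon$ and integrate over $\bm\eta$ first. Since $\bm\eta\sim\mathcal{N}(0,\mathbf{I}_{d_z})$, the scalar $\langle\nabla f(\bm\epsilon),\bm\eta\rangle$ is distributed as $\mathcal{N}(0,\|\nabla f(\bm\epsilon)\|^2)$ given $\bm\epsilon$, so the standard Gaussian MGF identity gives
\begin{equation*}
\mathbb{E}_{\bm\eta}\left[e^{\frac{\lambda\pi}{2}\langle\nabla f(\bm\epsilon),\bm\eta\rangle}\,\middle|\,\bm\epsilon\right]=\exp\left(\frac{\lambda^2\pi^2}{8}\|\nabla f(\bm\epsilon)\|^2\right).
\end{equation*}

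Finally, the $L$-Lipschitz hypothesis forces $\|\nabla f(\bm\epsilon)\|\leq L$ almost everywhere, so the exponent above is bounded pointwise by $\lambda^2\pi^2 L^2/8$; taking the outer expectation over $\bm\epsilon$ then produces the claimed bound $e^{L^2\lambda^2\pi^2/8}$, showing $f(\bm\epsilon)-\mathbb{E}[f(\bm\epsilon)]$ is sub-Gaussian with the stated parameter. The only substantive ingredient is Lemma \ref{lem:ineq} itself, which is already established via the rotational interpolation $\bm\gamma(\omega)=\bm\epsilon\sin(\omega)+\bm\eta\cos(\omega)$; conditional on that, there is no genuine obstacle. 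The one place to be careful is keeping the constant $\pi/2$ straight through the Gaussian MGF step, since squaring it against the factor $1/2$ is exactly what yields the $\pi^2/8$ appearing in the exponent. Note also that convexity of $\phi(x)=e^{\lambda x}$ is what makes Lemma \ref{lem:ineq} applicable, so the argument is restricted to $\lambda>0$, matching the statement.
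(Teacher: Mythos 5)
Your proposal is correct and follows essentially the same route as the paper: apply Lemma \ref{lem:ineq} with $\phi(x)=e^{\lambda x}$, integrate out $\bm\eta$ via the Gaussian moment generating function to obtain $\exp\left(\frac{\lambda^2\pi^2}{8}\|\nabla f(\bm\epsilon)\|^2\right)$, and then bound $\|\nabla f(\bm\epsilon)\|\leq L$ using the Lipschitz hypothesis. The only cosmetic difference is that the paper carries out the $\bm\eta$-integration coordinate-wise rather than by viewing $\langle\nabla f(\bm\epsilon),\bm\eta\rangle$ as a single one-dimensional Gaussian, which is the same computation.
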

\begin{proof} From Lemma \ref{lem:ineq}, we have
\begin{align*}
\mathbb{E}\left[e^{\lambda\left(f(\bm\epsilon)-\mathbb{E}[f(\bm\epsilon)]\right)}\right] \leq & \mathbb{E}_{\bm\epsilon,\bm\eta}\left[e^{\lambda\frac{\pi}{2}\langle \nabla f(\bm\epsilon),\bm\eta\rangle}\right] \nonumber\\
 =& \mathbb{E}_{\bm\epsilon,\bm\eta}\left[e^{\lambda\frac{\pi}{2}\sum_{i=1}^{d_z} \left(\eta_i\frac{\partial}{\partial \epsilon_i}f(\bm\epsilon) \right)}\right] = \mathbb{E}_{\bm\epsilon}\left[e^{\sum_{i=1}^{d_z} \frac{1}{2}\left(\lambda\frac{\pi}{2}\frac{\partial}{\partial \epsilon_i}f(\bm\epsilon)\right)^2}\right] = \mathbb{E}_{\bm\epsilon}\left[e^{\frac{\lambda^2\pi^2}{8} \|\nabla f(\bm\epsilon)\|^2}\right] \nonumber\\
\le& \exp\left(\frac{\lambda^2\pi^2L^2}{8}\right).\\
\end{align*}
\end{proof}

\textbf{Proof of Theorem 2}
Combining Lemma \ref{lem:Var} and Lemma \ref{lem:Gaussian} we complete the proof of Theorem 2. 

In addition, we can also obtain a tail bound,
\begin{align}\label{eq:tail}
\mathbb{P}_{\epsilon\sim\mathcal{N}(0,\mathbf{I}_{d_z})}\left(\left|f(\bm\epsilon)-\mathbb{E}[f(\bm\epsilon)]\right|\geq t\right)\leq 2e^{-\frac{2t^2}{\pi^2 L^2}}.
\end{align}
For $\lambda>0$, Let $\bm\epsilon_1,\dots, \bm\epsilon_M$  be i.i.d random variables with distribution $\mathcal{N}(0,\mathbf{I}_{d_z})$,
\begin{eqnarray*}
\mathbb{P}\left(\frac{1}{M}\sum_{m=1}^Mf(\bm\epsilon_m)-\mathbb{E}[f(\bm\epsilon)]\geq t\right) & = & \mathbb{P}\left(\sum_{m=1}^Mf(\bm\epsilon_m)-M\mathbb{E}[f(\bm\epsilon)]\geq Mt\right)\\
& = & \mathbb{P}\left(e^{\lambda\left(\sum_{m=1}^Mf(\bm\epsilon_m)-M\mathbb{E}[f(\bm\epsilon)]\right)}\geq e^{\lambda Mt}\right)\\
&\le& \mathbb{E}\left[e^{\lambda\left(\sum_{m=1}^Mf(\bm\epsilon_m)-M\mathbb{E}[f(\bm\epsilon)]\right)}\right]e^{-\lambda Mt}\\
& = & \left(\mathbb{E}\left[e^{\lambda\left(f(\bm\epsilon_m)-\mathbb{E}[f(\bm\epsilon)]\right)}\right]e^{-\lambda t}\right)^M.
\end{eqnarray*}
According to Lemma \ref{lem:Gaussian}, let $\lambda=\frac{4t}{\pi^2L^2}$, we have $\mathbb{P}\left(\frac{1}{M}\sum_{m=1}^Mf(\bm\epsilon_m)-\mathbb{E}[f(\bm\epsilon)]\geq t\right) \leq e^{-\frac{2Mt^2}{\pi^2 L^2}}$.  The other side can apply the same trick. Let $M=1$ we have Inequality (\ref{eq:tail}). Thus Theorem 2 and Inequality (\ref{eq:tail}) provide the theoretical guarantee for stochastic method for Gaussian variables.

\section{More on Conjugate Gradient Descent}
The preconditioned CG is used and theoretically the quantitative relation between the iteration $K$ and relative tolerance $e$ is $e < \exp(-2K/\sqrt{c})$ \cite{shewchuk1994introduction}, where $c$ is matrix conditioner.  Also the inequality indicates that the conditioner $c$ can be nearly as large as $O(K^2)$.

\section{Proof of Lemma 4}
\begin{proof}
Since $g(x)=\frac{1}{1+e^{-x}}$, we have $g'(x)=g(x)(1-g(x))\leq \frac{1}{4}$. 
\begin{eqnarray*}
|f(\bm\epsilon)-f(\bm\eta)| = |g(h_i(\bm\epsilon))-g(h_i(\bm\eta))|\leq \frac{1}{4}|h_i(\bm\epsilon)-h_i(\bm\eta)| \leq \frac{1}{4}\|W_{i,}\mathbf{R}\|_2\|\bm\epsilon-\bm\eta\|_2.
\end{eqnarray*}
Since $\tanh(x)=2g(2x)-1$ and $\log(1+e^x)'\leq 1$, the bound is trivial. 
\end{proof}

\section{Experiments}
All the experiments are conducted on a 3.2GHz CPU computer with X-Intel 32G RAM. For fair comparison, the algorithms and datasets we referred to as the baseline remain the same as in the previously cited work and software was downloaded from the website of relevant papers.

Datasets \texttt{DukeBreast}, \texttt{Leukemia} and \texttt{A9a} are downloaded from \url{http://www.csie.ntu.edu.tw/~cjlin/libsvmtools/datasets/binary.html}. Datasets Frey Face, Olivetti Face and MNIST are downloaded from \url{http://www.cs.nyu.edu/~roweis/data.html}.

\subsection{Variational logistic regression}
The optimized lower bound function when the covariance matrix $\mathbf{C}$ is diagonal is as following.
\begin{eqnarray*}
\mathcal{L}(\bm\mu,\bm\sigma) = \mathbb{E}_{\mathbf{z}\sim\mathcal{N}(0,\mathbf{I})} [\log l(\bm\mu+\bm\sigma\odot\mathbf{z})] + \frac{1}{2}\sum_{i=1}^d \log\frac{\sigma_i^2}{\sigma_i^2+\mu_i^2},
\end{eqnarray*}
where $l$ is the likelihood function.

The results are shown in Fig. \ref{fig:logreg} and Fig. \ref{fig:sparse}.

\begin{figure}[htb]
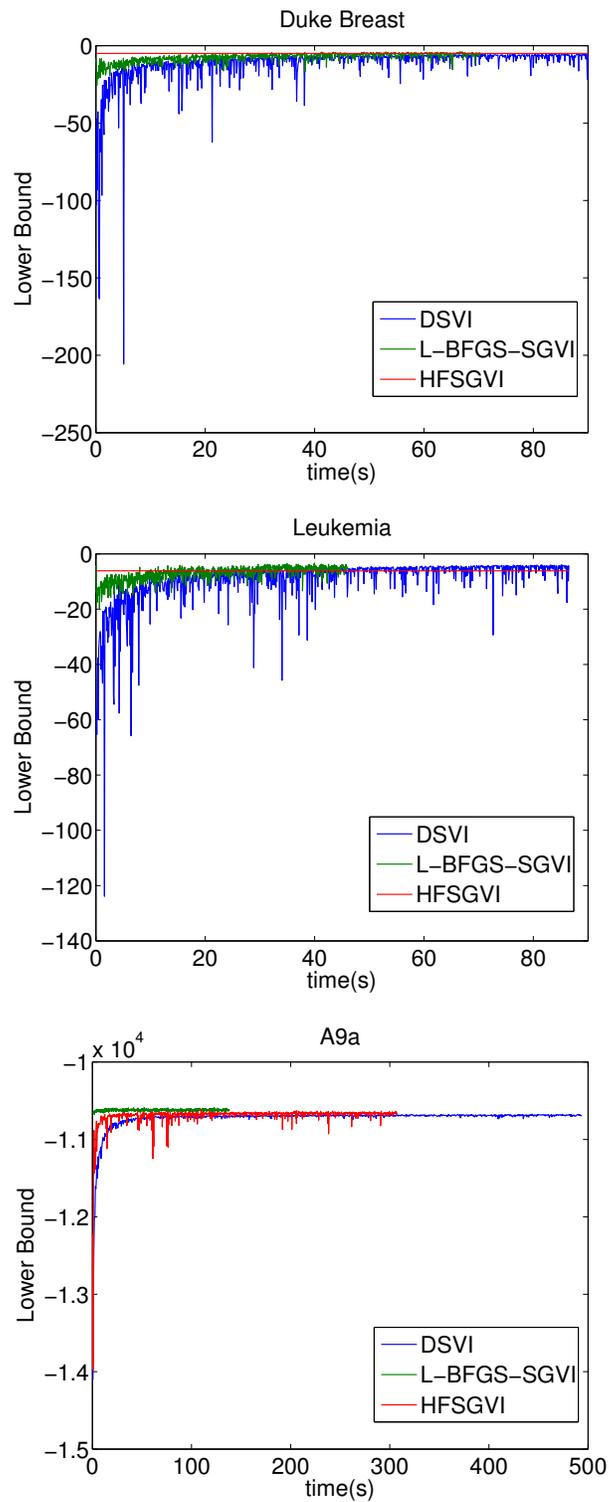

\centering
\subfigure{
\includegraphics[width=85mm]{duke.pdf}
}
\subfigure{
\includegraphics[width=85mm]{leu.pdf}
}
\subfigure{
\includegraphics[width=85mm]{a9a.pdf}
}
\caption{Convergence rate on variational logistic regression: HFSGVI converges within 3 iterations on small datasets.}
\centering
\label{fig:logreg}
\end{figure}

\begin{figure}[htb]
\centering
\subfigure{
\includegraphics[width=85mm]{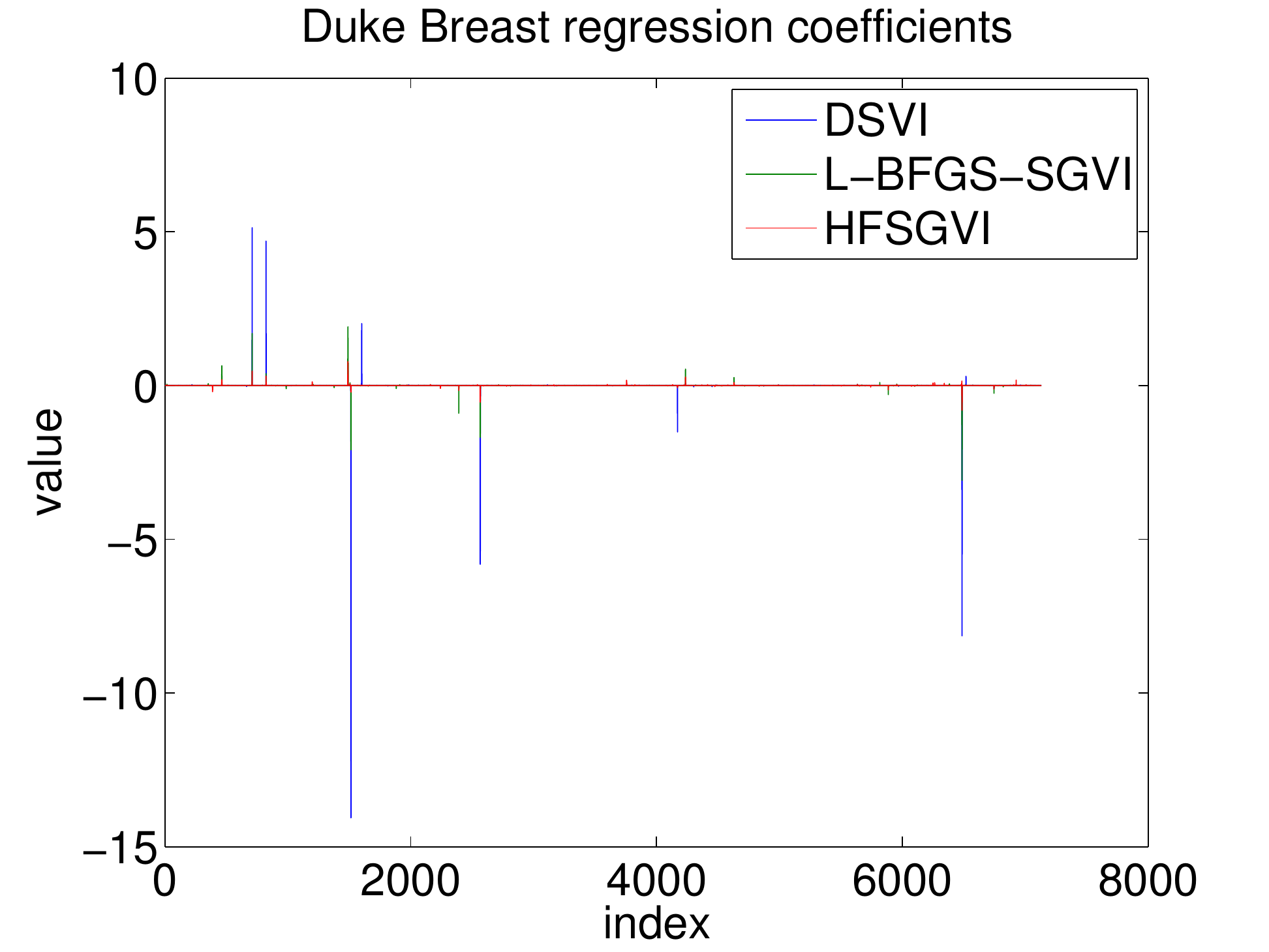}
}
\subfigure{
\includegraphics[width=85mm]{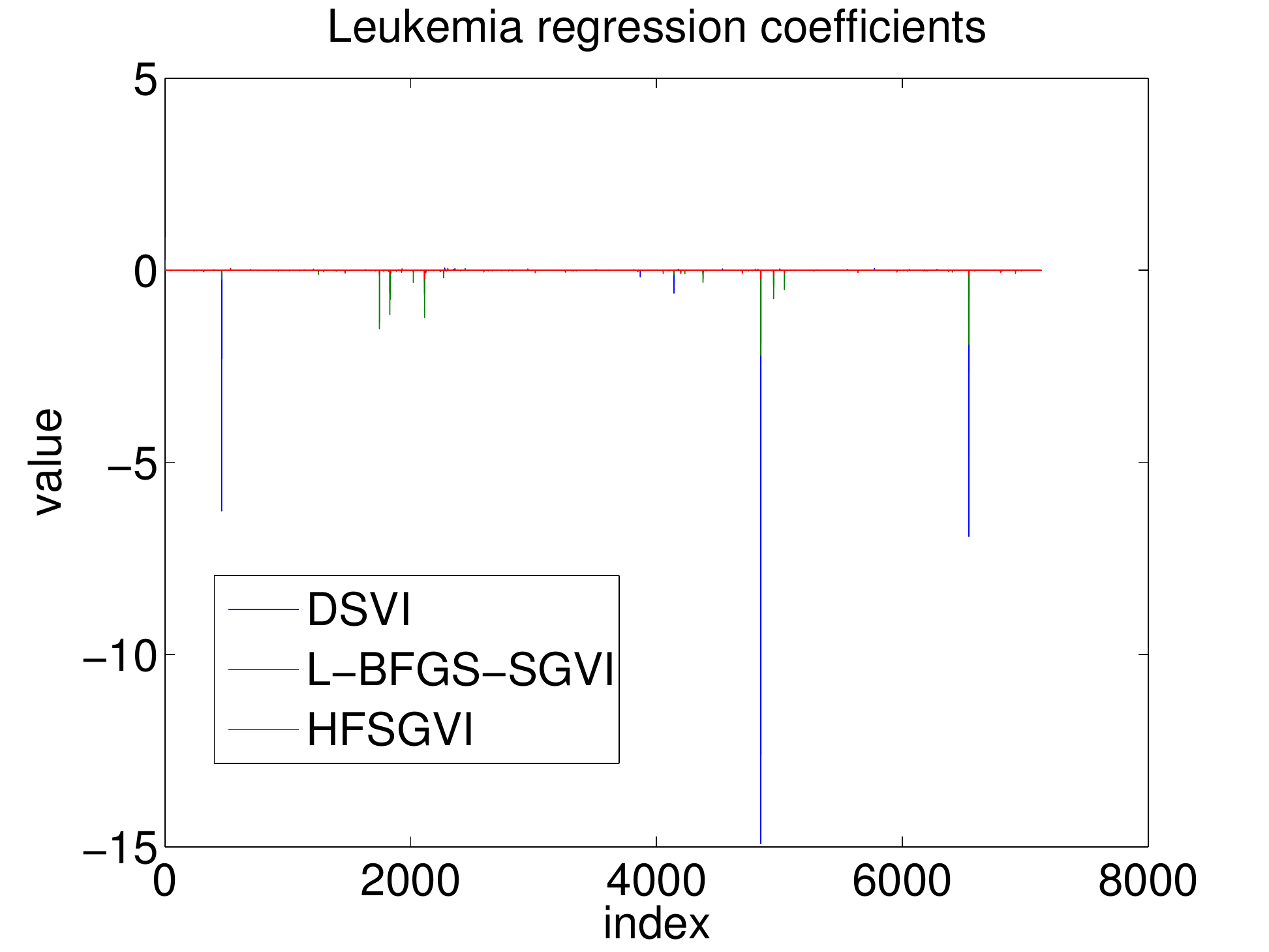}
}
\subfigure{
\includegraphics[width=85mm]{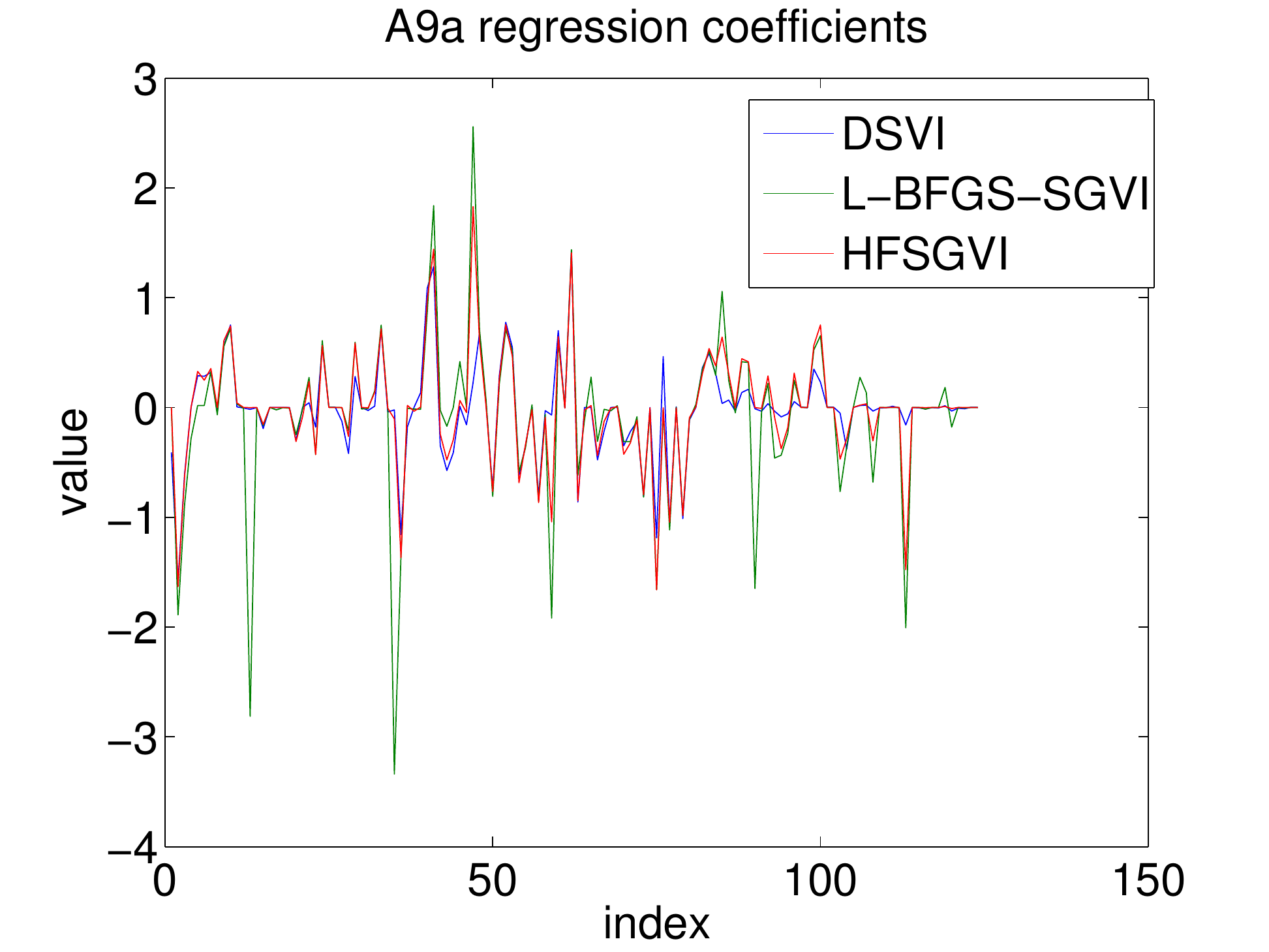}
}
\caption{Estimated regression coefficients}
\centering
\label{fig:sparse}
\end{figure}

\subsection{Variational Auto-encoder}
The results are shown in Fig. \ref{fig:mnist} and Fig. \ref{fig:olive}.

\begin{figure}[htb]
\centering
\subfigure[Convergenve]{
\includegraphics[width=80mm]{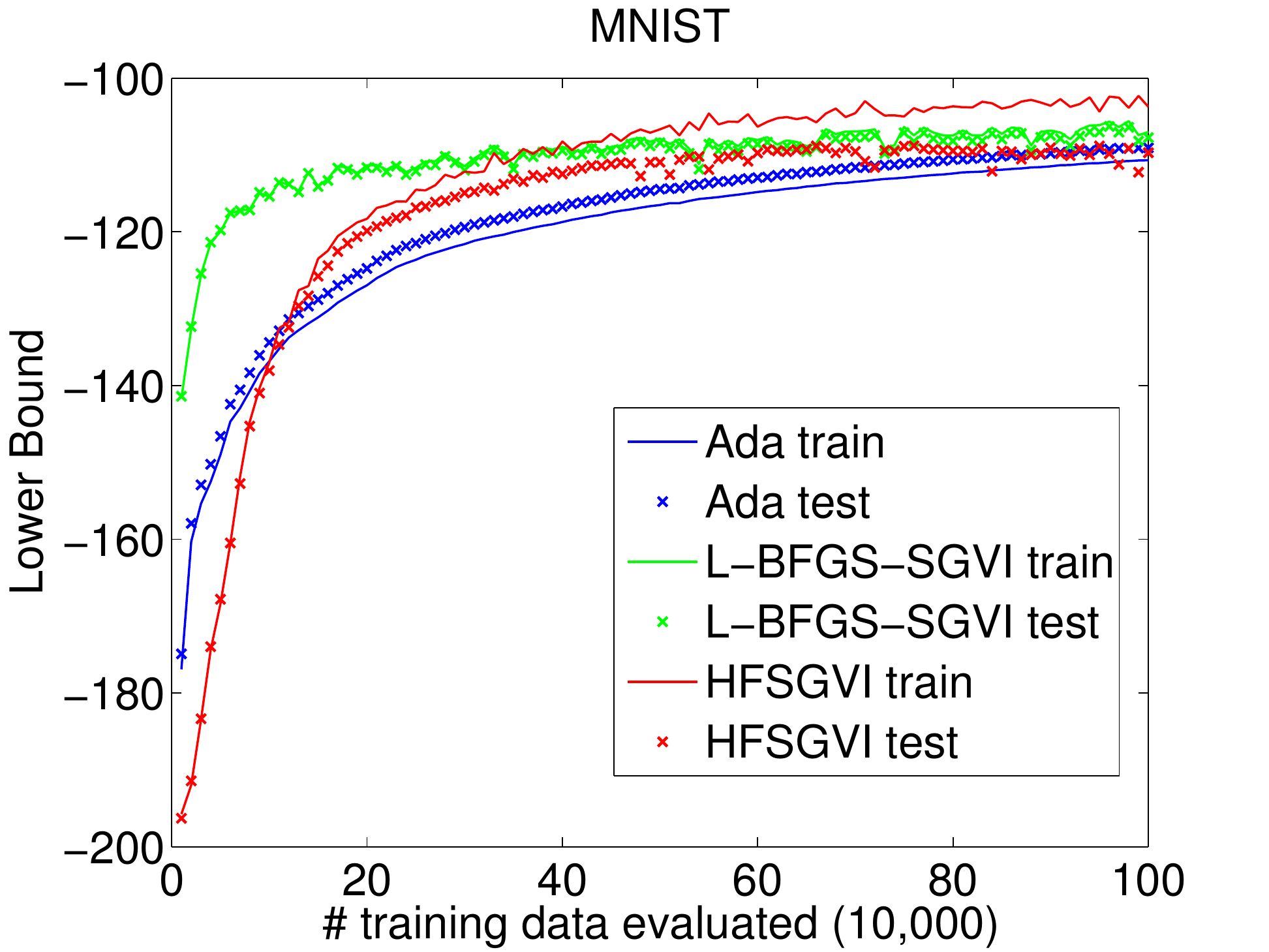}
}
\subfigure[Reconstruction and Manifold]{
\includegraphics[width=80mm]{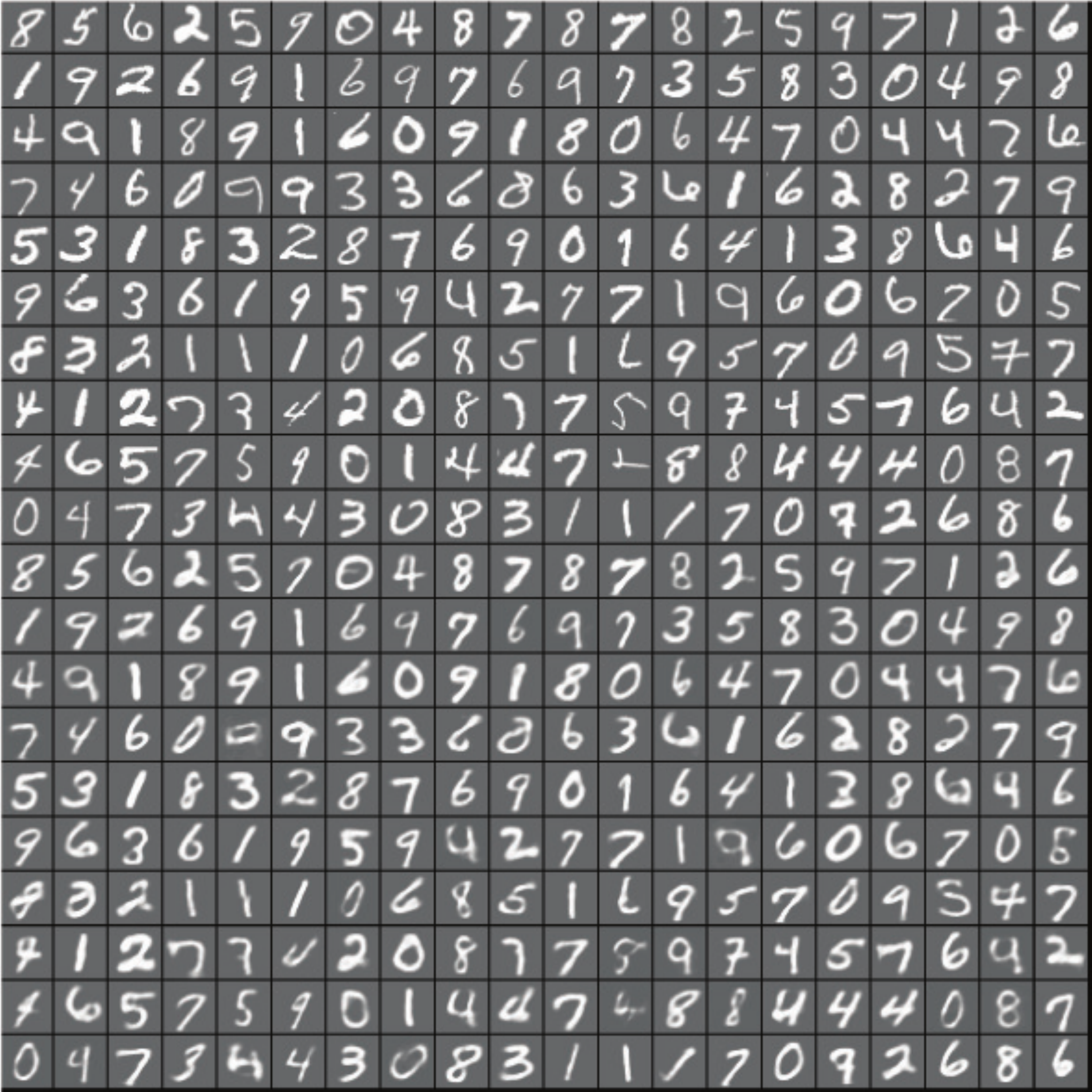}
\includegraphics[width=80mm]{mnist_z2_manifold.pdf}
}
\caption{(a) Convergence rate in terms of epoch. (b) Manifold Learning of generative model when $d_z=2$: two coordinates of latent variables $\mathbf{z}$ take values that were transformed through the inverse CDF of the Gaussian distribution from equal distance grid on the unit square. $p_{\bm\theta}(\mathbf{x}|\mathbf{z})$ is used to generate the images.}
\centering
\label{fig:mnist}
\end{figure}

\begin{figure}[htb]
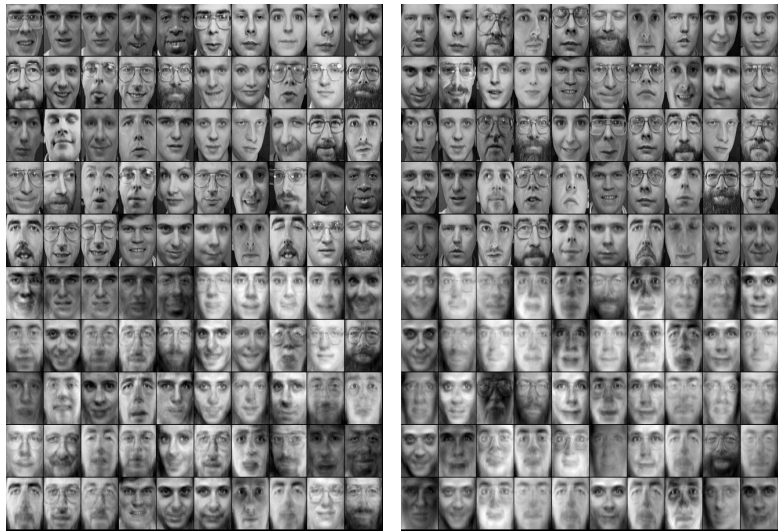
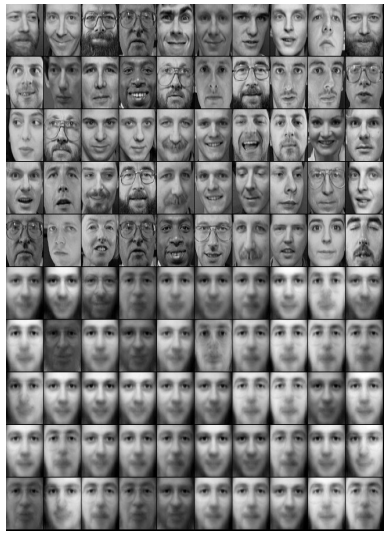

\centering
\subfigure[HFSGVI]{
\includegraphics[width=50mm,height=70mm]{olive_hf_recover.pdf}
}
\subfigure[L-BFGS-SGVI]{
\includegraphics[width=50mm,height=70mm]{olive_lbfgs_recover.pdf}
}
\subfigure[Ada-SGVI]{
\includegraphics[width=50mm,height=70mm]{olive_ada_recover.pdf}
}
\caption{Reconstruction Comparison}
\centering
\label{fig:olive}
\end{figure}

\end{document}